%% file: main.tex
\documentclass{opt2026} 

\title[Curvature-Aware Adaptive Perturbation SAM]{One Size Does Not Fit All: Curvature-Aware Adaptive Perturbation for SAM}

\usepackage{tikz}
\usepackage{multirow}

\usepackage{pgfplots}
\usepackage{subcaption}
\usepackage{algorithm}
\usepackage{algorithmic}
\usepackage{wrapfig}
\usepackage{amsmath,amssymb}
\usepackage{mathtools}
\usetikzlibrary{calc,arrows.meta}   
\usepackage{booktabs}               

\newtheorem{assumption}{Assumption}

\optauthor{%
\Name{Alessia Rossi} \Email{alessia.rossi@bristol.ac.uk}\\
\Name{Felipe Campelo} \Email{f.campelo@bristol.ac.uk}\\
\Name{Gabriel L. Oliveira} \Email{gabriel.leivasoliveira@bristol.ac.uk}\\
\addr School of Engineering Mathematics and Technology, University of Bristol, Bristol, Lovelace Building, Ada, Bristol, BS8 1TW, UK}

\begin{document}

\maketitle

\begin{abstract}%
Poor generalisation in deep neural networks is frequently associated with convergence to sharp minima. Sharpness-Aware Minimisation (SAM) improves generalisation of deep neural networks by minimising a combination of the training loss and the local sharpness of the loss landscape, biasing convergence towards flat minima. However, SAM relies on a fixed perturbation radius $\rho$ that ignores local curvature, leading to inefficient behaviour: the perturbation may be too small to escape poor basins defined by sharp local minima, or too large to allow convergence to solutions with good loss-sharpness tradeoff. We reinterpret $\rho$ as the intensity of the perturbation signal and propose Curvature-Aware Adaptive Perturbation for SAM (CAP-SAM), a geometry-aware method that modulates the perturbation radius based on a sharpness ratio, enabling a more effective search for robust minima on the loss landscape. We prove that CAP-SAM preserves the same convergence rate as standard SAM in non-convex stochastic settings. Extensive experiments indicate that our approach consistently outperforms SAM and its most prominent variants across diverse network architectures on CIFAR-10, CIFAR-100, and ImageNet.
\end{abstract}

\section{Introduction}

Despite the remarkable success of deep neural networks in a wide range of tasks \cite{he2016deep,Samuel59,krizhevsky2009learning,vaswani2017attention}, their ability to generalise to unseen data remains a persistent challenge in modern deep learning \cite{DBLP:journals/corr/ZhangBHRV16,hendrycks2021many}.
Recent work suggests that the geometry of the loss landscape plays a central role in model generalisation \cite{2998687.2998753,Measures,DBLP:journals/corr/NeyshaburBMS17}.
Flat minima have long been associated with better generalisation \cite{DBLP:journals/corr/DinhPBB17,andriushchenko2023modern,6796962}, and it has been observed that sharp minima, regions of high curvature where small perturbations cause large increases in loss, generalise poorly to unseen data \cite{chaudhari_entropy-sgd_2017} .

This led to \textit{Sharpness-Aware Minimization} (SAM) \cite{sam} which minimises the maximum loss within a neighbourhood of the current parameters, enabling it to identify flat minima associated with improved generalisation performance. When the model reaches a sharp region, the perturbation step 
helps it escape the sharp minimum and move toward flatter areas of the loss landscape \cite{DBLP:journals/corr/KeskarMNST16}.

The perturbation radius $\rho$ is among SAM's most critical hyperparameters: its value strongly affects the solution quality, and it must be carefully tuned for each dataset and architecture. A key limitation of current formulations is that $\rho$ is fixed throughout training, so a single value must serve regions of different curvature. It may be insufficient to escape very sharp regions, leaving the optimizer trapped in poor minima, or perturb too aggressively in flat regions, pushing the optimizer away from a good solution. This motivates a new treatment of $\rho$; rather than a static hyperparameter, we approach it as a dynamic quantity that should adapt to the local geometry of the loss landscape.


The contributions of this paper can be summarized as follows:
\begin{itemize}
    \item We provide a new interpretation of the perturbation radius $\rho$ in SAM, revealing its limitations, and proposing CAP-SAM, which adapts its radius to local sharpness. 
    \item We prove that CAP-SAM converges at the same rate as SAM, i.e., $\mathcal{O}\!\left(\log T / \sqrt{T}\right)$. We further show that SAM's perturbation becomes geometry-blind near stationary points and that our approach mitigates this.
    \item We validate CAP-SAM on several baselines and datasets (CIFAR-10/100 and ImageNet), demonstrating improvements in accuracy and generalisation over SAM and its variants.
\end{itemize}

\begin{figure}[h]
    \centering
    \includegraphics[width=0.75\linewidth]{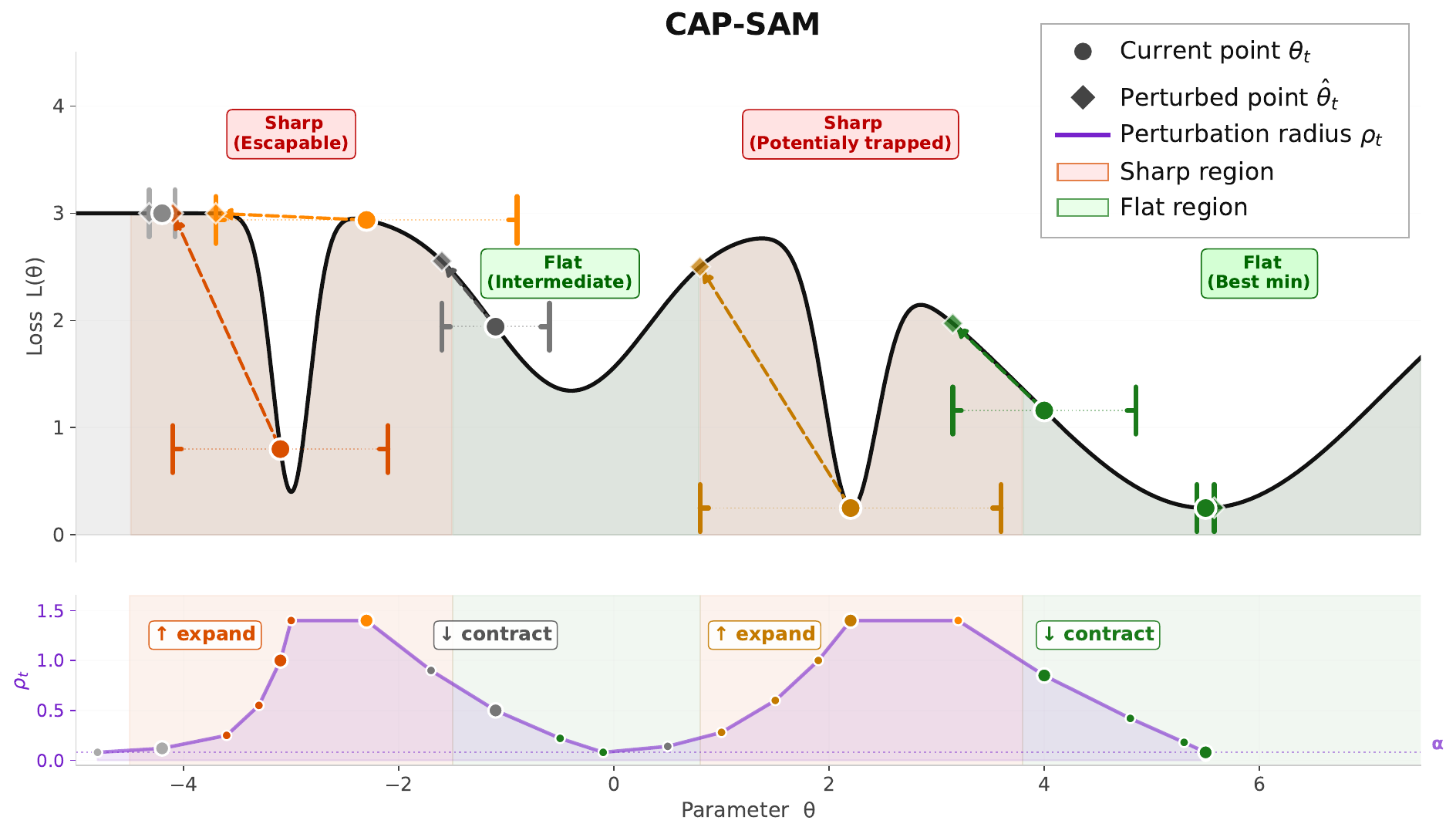}
    \caption{
    Curvature-aware perturbation radius scheduling. The top panel illustrates how our method adjusts the perturbation radius based on the sharpness characteristics of the loss landscape.
    The bottom panel shows how $\rho_t$ evolves: when in a sharp region, $\rho_t$ expands by a factor of $\gamma$ to look further ahead and escape the sharp basin; when in a flat region, $\rho_t$ contracts by the same factor, so the optimizer stops perturbing aggressively and settles into the local minimum.}

    
    \label{fig:sam_adaptive}
\end{figure} 
 \vspace{-15pt}

\section{Methodology}


We introduce a dynamically adaptive perturbation radius that addresses the fundamental limitations of the standard SAM formulation, based on a novel reinterpretation of the perturbation radius $\rho$ as the magnitude of the perturbation signal applied to the parameter vector. Vanilla SAM \cite{sam} employs a fixed perturbation $\rho$ which remains constant throughout the optimization procedure. In contrast, our formulation modulates this intensity at each step in response to the local sharpness changes observed during training.
The method is seamlessly incorporated into any SAM variant, augmenting its capabilities at negligible additional computational cost.



\subsection{Sharpness-Aware Minimization}
SAM is an optimization approach focusing on robust learning to enhance the generalisation ability of deep neural networks 
by solving the min-max problem


\[
  \min_w \; f^{\mathrm{SAM}}(w)  
  \quad\text{where}\quad 
  f^{\mathrm{SAM}}(w) \triangleq \max_{\|\epsilon\| \leq \rho}f(w + \epsilon);
\]

\noindent where $\epsilon$ denotes the perturbation applied to the weights
$w$, $\rho$ is the fixed perturbation radius that controls the size of the adversarial neighborhood.

The perturbation at iteration $t$ is computed as 
\begin{align}
\epsilon_t = \arg\max_{\|\epsilon\| \leq \rho} f(w_t + \epsilon) 
\approx \arg\max_{\|\epsilon\|\leq \rho} f(w_t) + \epsilon\nabla f(w_t) 
\approx \rho \frac{g_t(w_t)}{\|g_t(w_t)\|}, 
\label{eq:sam_perturbation}
\end{align}

SAM minimises the local worst-case loss by perturbing the parameter space, encouraging  convergence to flat minima. 





\noindent\textbf{Interpretation of SAM. } Notice that SAM formulates the perturbation as $\epsilon_t= \rho\frac{g_t(w_t)}{\|g_t(w_t)\|}$ \cite{sam} where $\rho $ denotes the radius of the perturbation neighborhood and $\frac{g_t(w_t)}{\|g_t(w_t)\|}$ is the direction of steepest ascent. However, we observe that $\rho $  can equivalently be interpreted as a multiplicative scalar controlling the magnitude of the ascent step, i.e., how far the parameters are displaced along the steepest ascent direction.
This perspective exposes a formal limitation of using a fixed radius, which constrains the perturbation to a fixed magnitude regardless of the local sharpness of the loss landscape. In very sharp regions, a fixed $\rho $  may be insufficient to escape the basin, while in flat regions the same $\rho $ may applies an unnecessary perturbation.
 
\subsection{Dynamic Radius Formulation}
The proposed method modifies the ascent step of SAM by introducing a dynamically adaptive perturbation radius $\rho_t$, adjusted at each optimization step based on local sharpness changes observed during training. The core of our strategy is centred around the sharpness ratio, defined as 
\begin{align}
r_t= 
\frac{\left| f(w_t + \epsilon_{w_t}) - f(w_t) \right|}
{\left| f(w_{t-1} + \epsilon_{w_{t-1}}) - f(w_{t-1}) \right|}
\approx 
\frac{\left| \langle g(w_t), \epsilon_{w_t} \rangle \right|}
{\left| \langle g(w_{t-1}), \epsilon_{w_{t-1}} \rangle \right|}
\approx 
\frac{\|g(w_t)\|}{\|g(w_{t-1})\|}.
\label{ratio}
\end{align}

\noindent capturing the relative change in local sharpness between consecutive optimization steps (Appendix \ref{r_approximation} for a theoretical explanation).
The perturbation radius is then updated according to 
\begin{align}
\rho_t =
      \begin{cases}
        \min\!\left(\gamma\,\rho_{t-1},\;\rho_{\max}\right),              & r_t \geq 1 \\[4pt]
        \max\!\left(\dfrac{\rho_{t-1}}{\gamma},\;\rho_{\min}\right), & \text{otherwise}
      \end{cases}
\label{rho}
\end{align}

When $r_t \geq 1$ the optimization has moved into a sharper region of the loss surface and $\rho_t$ grows geometrically by a factor of $\gamma$, until it is large enough to expose the optimizer to flatter regions.  On the other hand, when $r_t \leq 1$ the optimizer has entered a flatter region and $\rho_t$ contracts by the same factor, preserving the quality of the minimum found and preventing over perturbation.
The bounds $\rho_{\min}$
and $\rho_{\max}$ constrains the range of $\rho_t$ to $[\rho_{\min},\rho_{\max}]$.

Figure \ref{fig:sam_adaptive} depicts  the resulting radius dynamics across different regions of the loss surface. Our method treats $\rho_t$ as a quantity that stretches and shrinks in response to the local geometry of the loss landscape. 

   

 
\section{ Theoretical Analysis}

\subsection{Convergence analysis}
In this section, we analyze the convergence properties of CAP-SAM in nonconvex settings. 
Based on standard assumptions in nonconvex stochastic optimization (stated formally in Appendix \ref{app:assumptions}), we can now define Theorem \ref{thm:convergence}.

\begin{theorem}
\label{thm:convergence}
Suppose that Assumptions~\ref{ass:bounded_gradient}, \ref{ass:bounded_variance},
and \ref{ass:smoothness} hold, and that the perturbation radius satisfies the \ref{decaying_radius}, e.g., $\rho_t = \rho_{\max}/\sqrt{t}$ (Appendix \ref{app:assumptions}). Let $\eta_t = \eta_0/\sqrt{t}$. 
For $\rho_{\max} \leq \eta_0G$ and $\eta_0 \leq 1~/~L$, we have:
\[
    \frac{1}{T}\sum_{t=1}^{T}
    \mathbb{E}[\|\nabla f(w_t)\|^2]
    \;\leq\;
    \frac{C_1+C_2\log T}{\sqrt{T}},
\]
where
$
    C_1 \;=\; \frac{2}{\eta_0}
    \left(f(w_0) - \mathbb{E}\left[f(w_T)\right]\right),
    \qquad
    C_2 \;=\; 2L\left(\sigma^2\eta_0 
    + \rho_{\max}G\right).
$
\end{theorem}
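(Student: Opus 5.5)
The plan is to follow the standard descent-lemma argument for stochastic SAM, as in the convergence analyses of SAM with a decaying radius, and to treat the adaptive radius $\rho_t$ only through the decaying-radius assumption $\rho_t \le \rho_{\max}/\sqrt{t}$. Write the CAP-SAM update as $w_{t+1} = w_t - \eta_t g_t(\tilde w_t)$, where $\tilde w_t = w_t + \rho_t\, g_t(w_t)/\|g_t(w_t)\|$. By $L$-smoothness (Assumption~\ref{ass:smoothness}),
\begin{align*}
f(w_{t+1}) \le f(w_t) - \eta_t\langle \nabla f(w_t), g_t(\tilde w_t)\rangle + \tfrac{L\eta_t^2}{2}\|g_t(\tilde w_t)\|^2 .
\end{align*}
I will split the inner product as $\langle \nabla f(w_t), g_t(w_t)\rangle + \langle \nabla f(w_t), g_t(\tilde w_t) - g_t(w_t)\rangle$. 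The first piece has conditional expectation $\|\nabla f(w_t)\|^2$ by unbiasedness. For the second, I apply smoothness of the stochastic gradients together with $\|\tilde w_t - w_t\| = \rho_t$ and bounded gradients (Assumption~\ref{ass:bounded_gradient}), which gives $|\cdot| \le L\rho_t G$. For the quadratic term, Assumptions~\ref{ass:bounded_gradient} and \ref{ass:bounded_variance} give $\mathbb{E}\|g_t(\tilde w_t)\|^2 \le \sigma^2 + \|\nabla f(\tilde w_t)\|^2$. I expect to need a bound of the form $\mathbb{E}\|g_t(\tilde w_t)\|^2 \le 2\sigma^2 + (\text{something absorbable})$, so that after using $\eta_t \le \eta_0 \le 1/L$ the gradient-norm part can be absorbed into half of $\eta_t\|\nabla f(w_t)\|^2$. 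This is where the factor $2$ in $C_1$ and $C_2$ should come from.

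This yields a per-step inequality
\begin{align*}
\tfrac{\eta_t}{2}\mathbb{E}\|\nabla f(w_t)\|^2 \le \mathbb{E}[f(w_t) - f(w_{t+1})] + L\eta_t\rho_t G + L\sigma^2\eta_t^2 .
\end{align*}
Summing over $t=1,\dots,T$, the left side is at least $\tfrac{\eta_0}{2\sqrt{T}}\sum_t \mathbb{E}\|\nabla f(w_t)\|^2$, since $\eta_t \ge \eta_T$. The function values telescope to $f(w_0) - \mathbb{E} f(w_T)$, up to the index convention. For the error terms, $\eta_t\rho_t \le \eta_0\rho_{\max}/t$ and $\eta_t^2 = \eta_0^2/t$, so both sum to at most $(1+\log T)$ times a constant. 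Dividing by $\eta_0\sqrt{T}/2$ produces $C_1 = \tfrac{2}{\eta_0}(f(w_0)-\mathbb{E}f(w_T))$ and a $\log T$ coefficient $2L(\sigma^2\eta_0 + \rho_{\max}G)$, matching $C_2$. The additive constants from $1+\log T$ go into $C_1$ or are dropped for $T \ge 3$; I expect this bookkeeping to explain the stated constants. The condition $\rho_{\max} \le \eta_0 G$ serves to compare the perturbation error term with the step size, so that $\rho_t \le \eta_t G$ and the cross term is controlled at the same order as the variance term.

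The main obstacle is the cross term together with the quadratic term. The normalized perturbation direction depends on the same stochastic sample as $g_t(\tilde w_t)$, so the expectation of $\langle \nabla f(w_t), g_t(\tilde w_t)\rangle$ is not simply $\langle \nabla f(w_t), \nabla f(\tilde w_t)\rangle$. I will avoid this by never taking that expectation directly. Instead I bound the deviation $g_t(\tilde w_t) - g_t(w_t)$ deterministically via per-sample $L$-smoothness (assumed as part of Assumption~\ref{ass:smoothness}) and $\|\tilde w_t - w_t\| = \rho_t$. This costs a factor of $G$ but no dependence on the direction.

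A secondary subtlety is that the CAP-SAM $\rho_t$ is random, since it depends on $r_t$. The argument uses only the deterministic upper bound $\rho_t \le \rho_{\max}/\sqrt{t}$ from the decaying-radius assumption, so this randomness does not enter the expectations.
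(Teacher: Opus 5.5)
Your inner-product step is sound, and it differs from the paper. Lemmas~\ref{Lemma1}--\ref{Lemma2} split around the deterministic ascent point $w_t+\rho_t\nabla f(w_t)/\|\nabla f(w_t)\|$ and give up half of $\|\nabla f(w_t)\|^2$ to Young's inequality. Their Term~I estimate should in fact read $2L^2\rho^2$, since two unit vectors can be at distance $2$. Splitting around $g_t(w_t)$ instead gives the stronger $\mathbb{E}\langle\nabla f(w_t),g_t(\tilde w_t)\rangle\ge\mathbb{E}\|\nabla f(w_t)\|^2-L\rho_tG$, using per-sample smoothness, which the paper also uses implicitly. Your target per-step inequality is also the right one: it is what Lemma~\ref{Lemma3}, as stated, yields once $\rho_t\le\eta_tG$ is used.

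The gap is the quadratic term. First, bounded variance cannot be applied at $\tilde w_t$, because $\tilde w_t$ depends on the same mini-batch. More seriously, no bound of the form $2\sigma^2+(\text{something absorbable into }\tfrac{\eta_t}{2}\|\nabla f(w_t)\|^2)$ exists. With $b_t=g_t(\tilde w_t)-g_t(w_t)$ and $\|b_t\|\le L\rho_t$, the square $\|g_t(\tilde w_t)\|^2$ contains $2\langle\nabla f(w_t),b_t\rangle+\|b_t\|^2$ plus a noise--perturbation cross term. These are of order $L\rho_t\|\nabla f(w_t)\|+L^2\rho_t^2$, not $O(\|\nabla f(w_t)\|^2)$. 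Meanwhile your linear bound has already spent the whole budget $L\eta_t\rho_tG$.

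Bounding the two terms separately therefore double-counts. The linear bound is tight when $b_t$ is anti-parallel to $\nabla f(w_t)$, while the quadratic term is largest when $b_t$ is parallel to it. Concretely, take $\sigma=0$, $L\eta_1=1$, and $f=\tfrac L2\|w-w^\ast\|^2$ near $w_1$, extended Huber-style so Assumptions~\ref{ass:bounded_gradient} and~\ref{ass:smoothness} hold when $\|\nabla f(w_1)\|+L\rho_1\le G$. Then the quadratic term equals $\tfrac{\eta_1}{2}(\|\nabla f(w_1)\|+L\rho_1)^2$. Adding your linear bound overshoots your target by $\tfrac{\eta_1}{2}\bigl(2L\rho_1\|\nabla f(w_1)\|+L^2\rho_1^2\bigr)>0$.

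The missing idea is the device in Lemma~\ref{Lemma3}. Substitute $\|g\|^2=\|g-\nabla f(w_t)\|^2-\|\nabla f(w_t)\|^2+2\langle\nabla f(w_t),g\rangle$ with $g=g_t(\tilde w_t)$. The inner product then enters only with weight $(1-L\eta_t)\eta_t\ge0$, so the two occurrences of $\langle\nabla f(w_t),b_t\rangle$ partly cancel. Also use $\mathbb{E}\|g_t(\tilde w_t)-\nabla f(w_t)\|^2\le2L^2\rho_t^2+2\sigma^2$, splitting through $g_t(w_t)$, where bounded variance does apply. Combined with your inner-product bound this gives
\[
\mathbb{E}f(w_{t+1})\le\mathbb{E}f(w_t)-\Bigl(\eta_t-\tfrac{L\eta_t^2}{2}\Bigr)\mathbb{E}\|\nabla f(w_t)\|^2+L\eta_t\rho_tG+L\sigma^2\eta_t^2+L^2\eta_t^2\rho_t\,(L\rho_t-G).
\]
The last term is nonpositive because $L\rho_t\le L\rho_{\max}\le L\eta_0G\le G$. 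That is the concrete role of $\rho_{\max}\le\eta_0G$; the paper uses it the same way, cancelling $L^2\eta_t\rho_t^2$ against $-L^2G\eta_t^2\rho_t$. Since $\eta_t-\tfrac{L\eta_t^2}{2}\ge\tfrac{\eta_t}{2}$, this is exactly your per-step inequality, now with a rigorous derivation.

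Your summation then goes through with one correction. $\sum_{t=1}^T1/t\ge\log(T+1)>\log T$ for every $T$, so the additive constant can never be dropped, including for $T\ge3$. What you obtain is $C_1+C_2$ in place of $C_1$, with $f(w_1)-\mathbb{E}f(w_{T+1})$ from the telescoping. The paper has the same defect, hidden behind the false step $\sum_{t\le T}1/t\le\log T$.
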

For non-convex stochastic optimization, Theorem \ref{thm:convergence} implies that  the dynamic radius formulation has the same convergence rate as SAM, i.e., $\mathcal{O}\!\left(\log T / \sqrt{T}\right)$. 
Detailed proofs are in Appendix \ref{theorem}. 

\subsection{CAP-SAM reduces noise impact around the optimum}
\input{rho_impact_misalignment}
Given the stochastic gradient decomposition $g_t(w_t) = \nabla f(w_t) + \xi$, 
in the near-optimal regime where $\|\nabla f(w_t)\| \to 0$, the gradient 
becomes dominated by noise, i.e., $g_t(w_t) \approx \xi$. 
The SAM perturbation 
thus collapses to:
$\epsilon_t \approx \rho~\xi ~/~\|\xi\|$.
For Gaussian noise, $\epsilon_t $ is uniformly distributed on the sphere $\mathcal{S}_\rho(0)$, no longer reflecting directions of high curvature or sharpness, affecting the ability of SAM to accurately estimate local flatness.
The dynamic $\rho_t$ mitigates the effect of directional misalignment. By construction, $\rho_t$ tends to increase in sharp regions where $\|\nabla f(w_t)\|$ is large, and reduce in flat regions where $\|\nabla f(w_t)\| \approx 0$, naturally scaling the magnitude of the perturbation to the local geometry, limiting the influence of 
non-informative, noise-driven perturbations, as illustrated in Figure \ref{fig:misalignment}. 
A formal geometric analysis of the gradient misalignment and its implications are provided in Appendix \ref{gradientmismisalignment}.

\section{ Experimental Results}
\label{experiments}
We evaluate the proposed CAP-SAM through extensive experiments on standard benchmark datasets across diverse model architectures. We compare each method in two configurations: using a fixed perturbation radius $\rho$ and using our proposed dynamic $\rho$. 

\noindent\textbf{Experimental Settings.} We adopt CIFAR-10 \cite{krizhevsky2009learning}, CIFAR-100 \cite{krizhevsky2009learning} and ImageNet-1K \cite{imagenet_deng09} as benchmark datasets to evaluate the proposed method. For CIFAR-10 and CIFAR-100, we compare against SAM and three of its variants: ASAM \cite{kwon2021asam}, Fisher SAM \cite{kim22f}, and VASSO \cite{li2023enhancing}, across four neural network architectures: VGG-11 \cite{vgg}, ResNet-18 \cite{he2016deep}, WRN-28-10 \cite{DBLP:journals/corr/ZagoruykoK16} and PyramidNet-110 \cite{DBLP:journals/corr/HanKK16}.
For ImageNet-1K \cite{imagenet_deng09}, we train ResNet-50 from scratch and evaluate on the ImageNet validation set as well as four additional out-of-distribution test sets: ImageNet-Real/V2/R/Sketch~\cite{beyer2020we,recht2019imagenet,hendrycks2021many,wang2019learning}. We evaluate the proposed dynamic formulation applied to SAM, ASAM \cite{kwon2021asam}, and GSAM \cite{zhuang2022surrogate}.  
A comprehensive description of the experimental setup, including hyperparameters and a detailed description of the statistical tests, is provided in the Appendix \ref{Exp_setup}.

\noindent\textbf{Results.}
Tables \ref{tab:cifar10_adaptive_vs_constant} and \ref{tab:cifar100_adaptive_vs_constant} report the results of our method across CIFAR-10 and CIFAR-100, respectively. For CIFAR-10 our approach improves over the fixed radius baselines in $15$ of $16$ combinations. Gains are consistently more prominent for CIFAR-100 (Table \ref{tab:cifar100_adaptive_vs_constant}), where even for VaSSO, the strongest fixed-radius baseline on PyramidNet-110, we outperform the baselines. The mean gains observed for the dynamic $\rho$ over the constant radius are statistically significant at the $95\%$ level (CIFAR10: $p=0.0002,~CI_{95}=0.23\pm 0.11$; CIFAR100: $p=0.0013,~CI_{95}=1.14\pm 0.64$).


Table \ref{tab:imagenet_resnet50} reports results on ImageNet-1K and four out-of-distribution test sets. On ImageNet we significantly outperform the fixed-radius baselines ($p = 0.0012,~CI_{95}=0.31\pm 0.17$), with positive gains observed in $14$ of $15$ combinations, the sole exception being SAM on ImageNet-Sketch. All configurations surpass the vanilla SGD baseline (MHT-corrected $p<10^{-4}$ in all cases), confirming that curvature-aware adaptation is additive to the general benefit of sharpness-aware training rather than a substitute for it.

\noindent\textbf{Generalisation Effect.} Figure \ref{fig:pyramidnet110-cifar100} shows that CAP-SAM achieves a lower training loss than SAM \input{convergence} \noindent while reaching better test accuracy than both baselines. SGD prioritises data fitting at the cost of overfitting, SAM prioritises robustness but may underfit, and CAP-SAM balances the two, fitting the data effectively without sacrificing generalisation, converging  to flatter minima with stronger generalisation properties, a pattern consistent with \cite{DBLP:journals/corr/KeskarMNST16, DBLP:journals/corr/NeyshaburBMS17}. 
CAP-SAM also exhibits faster convergence than SAM and SGD (Figure \ref{fig:pyramidnet110-cifar100} (b)), suggesting a more efficient progress along the optimization trajectory.

\begin{table} [h]
\centering
\setlength{\tabcolsep}{5pt}
\resizebox{\textwidth}{!}{%
\begin{tabular}{l cc cc cc cc}
\toprule
& \multicolumn{2}{c}{\textbf{SAM}}
& \multicolumn{2}{c}{\textbf{ASAM}}
& \multicolumn{2}{c}{\textbf{Fisher SAM}}
& \multicolumn{2}{c}{\textbf{VaSSO}} \\
\cmidrule(lr){2-3}\cmidrule(lr){4-5}
\cmidrule(lr){6-7}\cmidrule(lr){8-9}
\textbf{Model}
  & Const. & Dynam.
  & Const. & Dynam.
  & Const. & Dynam.
  & Const. & Dynam. \\
\midrule
VGG-11-BN
  & $93.82_{\pm0.05}$ & $\mathbf{94.14_{\pm0.09}}$
  & $93.47_{\pm0.04}$ & $\mathbf{94.13_{\pm0.27}}$
  & $93.60_{\pm0.09}$ & $\mathbf{94.14_{\pm0.15}}$
  & $93.93_{\pm0.13}$ & $\mathbf{94.17_{\pm0.14}}$ \\
ResNet-18
  & $96.58_{\pm0.10}$ & $\mathbf{96.66_{\pm0.03}}$
  & $96.33_{\pm0.09}$ & $\mathbf{96.91_{\pm0.08}}$
  & $96.62_{\pm0.07}$ & $\mathbf{96.79_{\pm0.06}}$
  & $96.60_{\pm0.08}$ & $\mathbf{96.74_{\pm0.23}}$ \\
WRN-28-10
  & $97.32_{\pm0.11}$ & $\mathbf{97.47_{\pm0.06}}$
  & $97.15_{\pm0.05}$ & $\mathbf{97.56_{\pm0.07}}$
  & $97.46_{\pm0.06}$ & $\mathbf{97.48_{\pm0.10}}$
  & $97.46_{\pm0.09}$ & $97.16_{\pm0.08}$ \\
PyramidNet-110
  & $97.58_{\pm0.02}$ & $\mathbf{97.70_{\pm0.05}}$
  & $97.56_{\pm0.11}$ & $\mathbf{97.76_{\pm0.03}}$
  & $97.59_{\pm0.07}$ & $\mathbf{97.77_{\pm0.06}}$
  & $97.64_{\pm0.09}$ & $\mathbf{97.76_{\pm0.05}}$ \\
\midrule
\end{tabular}}
\caption{%
  Top-1 test accuracy (\%) on CIFAR-10 with Cutout augmentation.
}
\label{tab:cifar10_adaptive_vs_constant}
\end{table}
\newpage
\noindent\textbf{Hyperparameter sensitivity.} We analyze the impact of the hyperparameters of CAP-SAM.
\input{hyperparameter} \noindent As shown in Figure \ref{fig:ablation-hyperparams}, the parameters that meaningfully influence performance are the initial radius $\rho_0$ and the growth factor $\gamma$, which benefit from light tuning, consistent with standard practice for SAM and its variants. The lower and upper bounds, $\rho_{min}$ and  $\rho_{\max}$, have negligible impact on performance (Figures \ref{fig:ablation-hyperparamsmaxmin} (c) and (d) in Appendix \ref{sensitivity}), as they serve primarily to clip the updated radius within a safe interval. 

\begin{table}[t]
\centering
\setlength{\tabcolsep}{5pt}
\resizebox{\textwidth}{!}{%
\begin{tabular}{l cc cc cc cc}
\toprule
& \multicolumn{2}{c}{\textbf{SAM}}
& \multicolumn{2}{c}{\textbf{ASAM}}
& \multicolumn{2}{c}{\textbf{Fisher SAM}}
& \multicolumn{2}{c}{\textbf{VaSSO}} \\
\cmidrule(lr){2-3}\cmidrule(lr){4-5}
\cmidrule(lr){6-7}\cmidrule(lr){8-9}
\textbf{Model}
  & Const. & Dynam.
  & Const. & Dynam.
  & Const. & Dynam.
  & Const. & Dynam. \\
\midrule
VGG-11-BN
  & $73.59_{\pm0.10}$ & $\mathbf{74.15_{\pm0.14}}$
  & $68.53_{\pm0.15}$ & $\mathbf{73.46_{\pm0.09}}$
  & $73.82_{\pm0.31}$ & $\mathbf{74.38_{\pm0.13}}$
  & $74.02_{\pm0.07}$ & $\mathbf{74.23_{\pm0.19}}$ \\
ResNet-18
  & $80.52_{\pm0.37}$ & $\mathbf{81.10_{\pm0.08}}$
  & $78.91_{\pm0.27}$ & $\mathbf{81.03_{\pm0.12}}$
  & $80.82_{\pm0.14}$ & $\mathbf{81.12_{\pm0.14}}$
  & $80.70_{\pm0.21}$ & $\mathbf{81.29_{\pm0.24}}$ \\
WRN-28-10
  & $84.64_{\pm0.10}$ & $\mathbf{85.10_{\pm0.16}}$
  & $83.88_{\pm0.27}$ & $\mathbf{85.03_{\pm0.21}}$
  & $84.47_{\pm0.06}$ & $\mathbf{84.94_{\pm0.16}}$
  & $84.77_{\pm0.07}$ & $\mathbf{84.87_{\pm0.28}}$ \\
PyramidNet-110
  & $85.60_{\pm0.11}$ & $\mathbf{86.49_{\pm0.11}}$
  & $83.72_{\pm0.09}$ & $\mathbf{86.90_{\pm0.18}}$
  & $85.55_{\pm0.14}$ & $\mathbf{86.64_{\pm0.23}}$
  & $85.85_{\pm0.09}$ & $\mathbf{86.89_{\pm0.35}}$ \\
\midrule
\end{tabular}}
\caption{%
  Top-1 test accuracy (\%) on CIFAR-100 with Cutout augmentation.
}
\label{tab:cifar100_adaptive_vs_constant}
\vspace{-12pt}
\end{table}


\begin{table}[h]
\centering
\small
\setlength{\tabcolsep}{14pt}
\resizebox{\textwidth}{!}{%
\begin{tabular}{ll ccccc}
\midrule
\multirow{2}{*}{\textbf{Method}}
  & 
  & \multicolumn{5}{c}{\textbf{IMAGENET-}} \\
\cmidrule(lr){3-7}&
& 1K
& Real
& V2
& R
& Sketch \\

\toprule
SGD
     & --- &
        $76.72_{\pm0.09}$  & $83.02_{\pm0.02}$  & $64.40_{\pm0.07}$ & $19.95_{\pm0.12}$ & $20.56_{\pm0.20}$ \\
        
    \midrule

\multirow{2}{*}{SAM}
        & Const.
        & $77.45_{\pm 0.088}$ 
& $83.84_{\pm 0.08}$ 
& $65.34_{\pm 0.16}$ 
& $20.81_{\pm 0.34}$ 
& $22.12_{\pm 0.17}$ \\
        
     & Dynam.
       & $\mathbf{77.65_{\pm 0.11} }$
&$\mathbf{84.03_{\pm 0.07}}$
& $\mathbf{65.53_{\pm 0.18}}$
& $\mathbf{20.95_{\pm 0.07}}$ 
& $21.90_{\pm 0.18}$ \\
    \midrule

\multirow{2}{*}{GSAM}
    & Const. 
    & $77.37_{\pm 0.07}$ 
& $83.83_{\pm 0.07}$ 
& $65.42_{\pm 0.18}$ 
& $20.93_{\pm 0.25}$ 
& $21.26_{\pm 0.51}$ \\
       
    & Dynam.
       & $\mathbf{77.65_{\pm 0.15}}$
    &$\mathbf{83.98_{\pm 0.09}}$
    &$\mathbf{65.54_{\pm 0.22}}$
    & $\mathbf{21.14_{\pm 0.09}}$ 
    &$\mathbf{22.24_{\pm 0.16}}$  \\
 \midrule

\multirow{2}{*}{ASAM}
    & Const.
    &$77.11_{\pm 0.08}$ & $84.07_{\pm 0.12}$ & $65.06_{\pm 0.18}$ & $21.38_{\pm 0.18}$  & $21.67_{\pm 0.19}$\\
     
 & Dynam. 
    &$\mathbf{77.64_{\pm 0.17} }$ &$\mathbf{84.17_{\pm 0.19}} $ & $\mathbf{65.86_{\pm 0.09}}$ & $\mathbf{21.57_{\pm 0.41}}$  & $\mathbf{22.42_{\pm 0.68}}$ \\

\bottomrule
\end{tabular}}
\caption{
Top-1 accuracy (\%) on ImageNet and robustness datasets for ResNet-50.
}
\label{tab:imagenet_resnet50}
\vspace{-20pt}
\end{table}



\section{Conclusion}
In this paper, we investigated SAM and provided a new interpretation of the perturbation radius $\rho$ as a signal of the perturbation magnitude. Based on this finding, we proposed CAP-SAM,  which modifies the ascent 
step of SAM by adjusting the perturbation radius based on local sharpness estimates across steps. We theoretically investigated the impact of stochastic gradient noise on SAM's perturbation direction, showing that it becomes increasingly uninformative about local curvature near convergence, and demonstrated that CAP-SAM mitigates this undesirable effect. We validated CAP-SAM across several baselines as a plug-in for SAM and its variants, achieving consistent improvements in accuracy. 
\bibliography{references}

\appendix
\input{appendix}

\end{document}

%% file: rho_impact_misalignment.tex
\begin{figure}[H]
\centering
\begin{tikzpicture}[
    scale=1.0,
    >=Stealth,
    every node/.style={font=\small},
    signal/.style={->, line width=2pt, color=teal!70!black},
    noise/.style={->, line width=2pt, color=orange!80!black, dashed},
    sumvec/.style={->, line width=2pt, color=blue!70!black},
    epssam/.style={->, line width=2.5pt, color=red!85!black},
]

\begin{scope}[xshift=-0cm, local bounding box=panelA]

  \pgfmathsetmacro{\Ra}{0.9}

  \coordinate (O) at (0,0);
  \fill (O) circle (2.5pt);

  \coordinate (Ga) at (3, 0);
  \coordinate (Xa) at (0, 1.1);
  \coordinate (Sa) at ($(Ga)+(Xa)$);

  \draw[signal] (O) -- (Ga) node[right] {$ \xi$};
  \draw[noise]  (O) -- (Xa)
      node[left, font=\footnotesize] {$\nabla f(w_t)\!\approx\!0$};
  \draw[sumvec] (O) -- (Sa)
      node[above right, font=\footnotesize]
      {$g_t(w_t)\approx\xi$};

  \draw[gray!35, thin, dashed] (Ga) -- (Sa);
  \draw[gray!35, thin, dashed] (Xa) -- (Sa);
  \draw[gray!55, thin] (0.22,0) -- (0.22,0.22) -- (0,0.22);

  \pgfmathsetmacro{\Sanorm}{veclen(3,0.55)}
  \coordinate (eSAMa) at ($(O)!{\Ra/\Sanorm}!(Sa)$);
  \draw[epssam] (O) -- (eSAMa)
      node[pos=0.8, above, font=\small, xshift=12pt, yshift=5pt]
      {$\varepsilon_{\mathrm{SAM}}$};

  \draw[blue!55, dashed, line width=1pt]
      (\Ra,0) arc[start angle=0, end angle=90, radius=\Ra];
  \node[blue!75, font=\small] at (-0.6, \Ra-0.2) {$\rho_t^{\mathrm{small}}$};

  \pgfmathsetmacro{\anga}{atan2(1.1,3)}
\draw[purple!70, line width=1pt]
    ({\Ra*0.5},0) arc[start angle=0, end angle=\anga, radius={\Ra*0.5}];
  \node[purple!80, font=\footnotesize] at
      ({1.3*cos(\anga/2)},{1.1*sin(\anga/2)}) {$\theta\!\approx\!0$};

  \node[font=\small, align=center] at (1.6, -0.7)
      {(a) Flat region};

\end{scope}

\begin{scope}[xshift=6.0cm, local bounding box=panelB]

   \pgfmathsetmacro{\Rb}{1.8}
 
  \coordinate (O) at (0,0);
  \fill (O) circle (2.5pt);
 
  \coordinate (G) at (2.1, 0);
  \coordinate (X) at (0, 3.0);
  \coordinate (S) at ($(G)+(X)$);   
 
  \draw[signal] (O) -- (G) node[right] {$ \xi$};
  \draw[noise]  (O) -- (X) node[above] {$\nabla f(w_t)$};
  \draw[sumvec] (O) -- (S) node[above right] {$g_t(w_t)$};
 
  \draw[gray!35, thin, dashed] (G) -- (S);
  \draw[gray!35, thin, dashed] (X) -- (S);
 
  \draw[gray!55, thin] (0.22,0) -- (0.22,0.22) -- (0,0.22);
 
  \pgfmathsetmacro{\Snorm}{veclen(2.1,3.0)}
  \coordinate (eSAM) at ($(O)!{\Rb/\Snorm}!(S)$);
  \draw[epssam] (O) -- (eSAM)
      node[pos=0.99, above,font=\small, xshift=-6pt, yshift=4pt] {$\varepsilon_{\mathrm{SAM}}$};
 
  \draw[blue!55, dashed, line width=1pt]
      (\Rb,0) arc[start angle=0, end angle=90, radius=\Rb];
  \node[blue!75, font=\small] at (-0.6, \Rb-0.2) {$\rho_t^{\mathrm{large}}$};
 
  \pgfmathsetmacro{\angS}{atan2(3.0,2.1)}
  \draw[purple!70, line width=1pt]
      ({0.5*\Rb}, 0)
      arc[start angle=0, end angle=\angS, radius=0.5*\Rb];
  \node[purple!80, font=\footnotesize] at
      ({0.68*\Rb*cos(\angS/2)},{0.68*\Rb*sin(\angS/2)}) {$\theta$};
 
  \node[font=\small, align=center] at (1.5, -0.7)
      {(b) Sharp region};
\end{scope}

\end{tikzpicture}
\caption{
Illustration of SAM perturbation directions in flat and sharp regions of the loss landscape.}
\label{fig:misalignment}
\label{fig:pyramidnet110-cifar100}

\end{figure}

%% file: convergence.tex
\pgfplotsset{compat=1.18}

\pgfplotsset{
  sam plot/.style={
    axis lines=left,
    tick align=outside,
    tick pos=left,
    grid=major,
    grid style={dotted, gray!40},
    label style={font=\small},
    tick label style={font=\small},
    xlabel style={font=\small},
    ylabel style={font=\small},
    legend pos=north east,
    legend style={draw=none, fill=none, font=\tiny},
    clip=false,
  },
  plotadaptive/.style={
    color={rgb,1:red,0.33;green,0.51;blue,0.67},
    line width=1.2pt,
    mark=none,
  },
  plotsam/.style={
    color={rgb,1:red,0.72;green,0.40;blue,0.36},
    line width=1.2pt,
    mark=none,
  },
  sgd/.style={
    color={rgb,1:red,0.42;green,0.57;blue,0.42},
    line width=1.2pt,
    mark=none,
  },
  cifarten/.style={
    color={rgb,1:red,0.76;green,0.56;blue,0.25},
    line width=1.2pt,
    mark=none,
  },
  cifarhundred/.style={
    color={rgb,1:red,0.55;green,0.42;blue,0.60},
    line width=1.2pt,
    mark=none,
  },
}

\begin{wrapfigure}{l}{0.71\textwidth}
     \vspace{-\intextsep}
     \centering
%
\begin{minipage}[t]{0.33\textwidth}
\centering
\begin{tikzpicture}[font=\footnotesize]
\begin{axis}[
  sam plot,
  width=\linewidth, height=4.4cm,
  xlabel={Number of Epochs},
  ylabel={$\mathcal{L}(\mathcal{D};\,w_t)$},
  title={\small (a) Training Loss},
  xmin=0, xmax=299,
  ymin=0, ymax=4.3
]
  \addplot[plotadaptive] table[x=epoch, y=train_loss, col sep=comma]
    {CIFAR/CIFAR100/CIFAR100_cutout-pyramidnet110-adaptive.csv};
  \addlegendentry{CAP-SAM}
  \addplot[plotsam] table[x=epoch, y=train_loss, col sep=comma]
    {CIFAR/CIFAR100/CIFAR100_cutout-pyramidnet110-sam.csv};
  \addlegendentry{SAM}
  \addplot[sgd] table[x=epoch, y=train_loss, col sep=comma]
    {CIFAR/CIFAR100/CIFAR100_cutout-pyramidnet110-sgd.csv};
  \addlegendentry{SGD}
\end{axis}
\end{tikzpicture}
\end{minipage}
\hspace{0.03\textwidth}
\begin{minipage}[t]{0.33\textwidth}
\centering
\begin{tikzpicture}[font=\footnotesize]
\begin{axis}[
  sam plot,
  width=\linewidth, height=4.2cm,
  xlabel={Number of Epochs},
  ylabel={Top-1 accuracy (\%)},
  title={\small (b) Test Accuracy},
  xmin=0, xmax=299,
  ymin=0, ymax=99,
  legend style={draw=none, fill=none, font=\tiny, legend pos=south east},
]
  \addplot[plotadaptive] table[x=epoch, y=test_acc1, col sep=comma]
    {CIFAR/CIFAR100/CIFAR100_cutout-pyramidnet110-adaptive.csv};
  \addlegendentry{CAP-SAM}
  \addplot[plotsam] table[x=epoch, y=test_acc1, col sep=comma]
    {CIFAR/CIFAR100/CIFAR100_cutout-pyramidnet110-sam.csv};
  \addlegendentry{SAM}
  \addplot[sgd] table[x=epoch, y=test_acc1, col sep=comma]
    {CIFAR/CIFAR100/CIFAR100_cutout-pyramidnet110-sgd.csv};
  \addlegendentry{SGD}
\end{axis}
\end{tikzpicture}
 \end{minipage}
\caption{%
  PyramidNet-110 on CIFAR-100.
  \textbf{(a)}~Training loss $\mathcal{L}(\mathcal{D};\,w_t)$ and
  \textbf{(b)}~test accuracy (top-1) for CAP-SAM, SAM, and SGD.
}
\label{fig:pyramidnet110-cifar100}

\end{wrapfigure}
 

%% file: hyperparameter.tex
\pgfplotsset{
  hyperparam/.style={
    axis lines=left,
    tick align=outside,
    tick pos=left,
    grid=none,
    clip=false,
    label style={font=\scriptsize},
    tick label style={font=\scriptsize},
    xlabel style={font=\scriptsize},
    ylabel style={font=\scriptsize},
    ylabel={Top-1 Acc. (\%)},
    ymin=96.0, ymax=97.0,
    ytick={96.0,96.3,96.6,96.9},
  },
  hypline/.style={
    color={rgb,1:red,0.26;green,0.53;blue,0.76},
    line width=1.0pt,
    dash pattern=on 4pt off 2pt on 1pt off 2pt,
    mark=*,
    mark size=3pt,
    mark options={
      fill={rgb,1:red,0.26;green,0.53;blue,0.76},
      draw={rgb,1:red,0.26;green,0.53;blue,0.76},
      solid,
    },
  },
}

\begin{wrapfigure}{l}{0.75\textwidth}
 \vspace{-\intextsep}
 \setlength{\abovecaptionskip}{2pt}
 \setlength{\abovecaptionskip}{2pt}%
\captionsetup{singlelinecheck=false}
 \centering

\begin{minipage}[t]{0.34\textwidth}
\centering\vspace{0pt}
\begin{tikzpicture}[font=\footnotesize]
\begin{axis}[
  hyperparam,
  width=\linewidth, height=3.6cm,
  title={\small (a) Perturbation radius $\rho_0$},
  xlabel={$\rho_0$},
  xmin=0.5, xmax=5.5,
  xtick={1,2,3,4,5},
  xticklabels={0.01, 0.05, 0.1, 0.2, 0.5},
]
  \addplot[hypline] coordinates {
    (1, 96.42) (2, 96.60) (3, 96.66) (4, 96.29) (5, 96.10)
  };
\end{axis}
\end{tikzpicture}
\label{fig:hyper-rho}
\end{minipage}
\hspace{0.03\textwidth}
\begin{minipage}[t]{0.34\textwidth}
\centering\vspace{0pt}
\begin{tikzpicture}[font=\footnotesize]
\begin{axis}[
  hyperparam,
  width=\linewidth, height=3.6cm,
  title={\small (b) Growth factor $\gamma$},
  xlabel={$\gamma$},
  xmin=0.5, xmax=5.5,
  xtick={1,2,3,4,5},
  xticklabels={1.0001, 1.001, 1.01, 1.05, 1.1},
]
  \addplot[hypline] coordinates {
    (1, 96.84) (2, 96.66) (3, 96.47) (4, 96.58) (5, 96.08)
  };
\end{axis}
\end{tikzpicture}
\label{fig:hyper-gamma}
\end{minipage}
\par
\caption{%
  Effect of hyperparameters 
with ResNet-18 on CIFAR-10.
}
\label{fig:ablation-hyperparams}
    \vspace{-\intextsep}

\end{wrapfigure}

%% file: appendix.tex

\newpage

\section{Related Works}

Several works have investigated the correlation between the generalization ability of a model and the sharpness of the minimum it converges to \cite{DBLP:journals/corr/DinhPBB17,DBLP:journals/corr/KeskarMNST16,wang2023sharpnessawaregradientmatchingdomain}. 

Following this line of work, subsequent studies have examined the connection between batch size and generalization, showing that the choice of batch size influences the tendency of a model to converge to a sharp or flat minimum \cite{DBLP:journals/corr/KeskarMNST16}.
Different measures have also been proposed to evaluate the geometric characteristics of the loss landscape and their connection to sharpness \cite{Measures,kim2023fantastic,zhuang2022surrogate}. 
These works justify the goal of seeking flatter valleys in the loss landscape to enhance generalization. Inspired by this, Sharpness-Aware Minimization (SAM) \cite{sam} was proposed as a strategy that targets flatter minima by minimizing the worst-case loss within a local neighborhood. SAM has been shown to achieve generalization improvements across a variety of tasks and benchmarks \cite{caldarola2025beyond,kwon2021asam,DBLP:journals/corr/abs-2010-04925}.    


A series of SAM variants have since been proposed, addressing SAM's limitations from different perspectives. From the perspective of the neighborhood's geometric measure, ASAM \cite{kwon2021asam} introduce scale invariance to address SAM's sensitivity to weight re-scaling, while Fisher SAM \cite{kim22f} replace SAM's Euclidean balls with ellipsoids induced by the Fisher information metric. From the perspective of the optimization stability, GSAM \cite{zhuang2022surrogate} jointly minimizes the perturbed loss and a surrogate gap, while the friendly-adversary approach of VaSSO \cite{li2023enhancing} stabilizes the adversarial perturbation through variance suppression. To reduce computational cost, a further line of work seeks to alleviate SAM's characteristic doubling of forward–backward passes. For instance, ESAM \cite{esam} reduces this overhead through stochastic weight perturbation and sharpness-sensitive data selection, and Sparse SAM \cite{ssam} perturbs only a sparse subset of parameters via a learned binary mask.


Despite their improvements over the vanilla formulation, these variants all operate within the original min-max framework, refining either the perturbation direction or the regularization term while treating the radius as a fixed hyperparameter shared across all iterations and regions of the landscape. The strategy proposed in this work, CAP-SAM, instead adapts the size of the perturbation radius itself based on local sharpness estimates, bridging the geometric and stability perspectives in a direction largely unexplored by existing SAM variants.

\section{Gradient misalignment}  
\label{gradientmismisalignment}
We analyze the behavior of the SAM perturbation near convergence by decomposing the stochastic gradient on a mini-batch $B$ as:
\[ g(w_t) =  \nabla f(w_t) +  \xi,\]
where $ \xi$ is a zero-mean Gaussian random variable representing the batch-specific noise . Assuming $g(w_t)$ is an unbiased estimator of the full gradient, we have $\mathbb{E}_B[ \xi] = 0$. Moreover, as $ \nabla f(w_t)$ is deterministic with respect to batch sampling, 
\[
\mathbb{E}_B\langle  \nabla f(w_t),\,  \xi \rangle
=
\langle  \nabla f(w_t),\, \mathbb{E}[\xi] \rangle = 0.
\]

Consequently, the stochastic noise has zero mean and zero expected projection onto the true gradient: on average it carries no systematic descent or curvature information. Figure \ref{fig:vectormisalignment}(a) illustrates this expected decomposition, where the stochastic gradient is expressed as the sum of the deterministic gradient and a zero-mean noise component.

The SAM perturbation at step $t$ is
\[
\epsilon_t
=
\rho\,
\frac{g(w_t)}{\|g(w_t)\|}
=
\rho\,
\frac{ \nabla f(w_t) + \xi}
     {\| \nabla f(w_t) + \xi\|}.
\]   
As optimization approaches minimum, where $\| \nabla f(w_t)\| \to 0$, the deterministic gradient is dominated by the noise 
and the perturbation collapses to
\[
\epsilon_t
\approx
\rho\,
\frac{\xi}{\|\xi\|},
\]
The perturbation direction becomes increasingly governed by stochastic gradient noise rather than by the deterministic descent direction, making the perturbation uninformative about the loss curvature .

\input{vector_misalignment}

\subsection{Geometric characterization of the misalignment}
To quantify this behavior, we measure the alignment between $\epsilon_t$ and the noise direction $\epsilon_{noise}=\rho\,\frac{\xi}{\|\xi\|}$.
The  alignment angle $\theta$ is given as the inverse cosine of
\[\cos\theta 
    \;=\; 
    \left\langle 
        \epsilon_t,\;
        \epsilon_{noise}
    \right\rangle
    \;=\;
    \left\langle 
        \frac{g(w_t)}
             {\|g(w_t)\|},\;
        \frac{\xi}{\|\xi\|}
    \right\rangle,
   \]
\noindent where $\cos\theta = 1$ indicates $\theta = 0$, i.e., perfect alignment.

Near a minimum, $\| \nabla f(w_t)\| \approx 0$,  so the gradient component vanishes and the expression simplifies to
\[\cos\theta 
    \;\approx\; 
    \left\langle 
        \frac{\xi}{\|\xi\|},\;
        \frac{\xi}{\|\xi\|}
    \right\rangle
    \approx
    \frac{\|\xi\|^2}{\|\xi\|^2}
    = 1
    \qquad \Longrightarrow \qquad \theta = 0.
   \]

Therefore, the stochastic gradient becomes increasingly aligned with the noise component, causing the SAM perturbation to converge to the normalized noise direction. Consequently, the ascent step is driven primarily by stochastic fluctuations rather than by the underlying loss geometry, reducing its effectiveness as an estimator of local sharpness, as illustrated in Figure \ref{fig:vectormisalignment} (b).

\section{Algorithm}

\begin{algorithm}[H]
\caption{CAP-SAM}
\label{alg:algorithm}
\begin{algorithmic}[1]
\STATE \textbf{Input:} Training set $D = \{(x_i,y_i)\}_{i=1}^n$,
      loss $L$, learning rate $\lambda$, growth factor $\gamma$,
      bounds $\rho_{\min} \leq \rho_{\max}$, initial radius $\rho_0$.

\STATE Initialise weights $w \leftarrow w_0$
   
\STATE Initialise perturbation radius $\rho \leftarrow \rho_0$

\WHILE {not converged}
    \STATE Sample a mini-batch $B$ data form $D$
    \STATE Compute stochastic gradient $g_t$
     \STATE Compute $r_t = \frac {\|g(w_t)\|}{\|g(w_{t-1})\|}$ 
    \STATE Compute 
$
\rho_t =
\begin{cases}
        \min\!\left(\gamma\,\rho_{t-1},\;\rho_{\max}\right),        & \mbox{if }r_t \geq 1 \\[4pt]
        \max\!\left(\dfrac{\rho_{t-1}}{\gamma},\;\rho_{\min}\right), & \text{otherwise}
      \end{cases}
$
    \STATE Compute $\epsilon_t = \rho_t {g(x_k) }~/~{\|g(x_k) \|_2}$
    \STATE Update weights $w_{t+1} \leftarrow w_t - \lambda\, g(w_t+\epsilon_t)$
\ENDWHILE
\STATE \textbf{Return} $w,\ g(w_{t-1})$
\end{algorithmic}
\end{algorithm}

\section{Theoretical Analysis}

\subsection{$r_t$ as a measure of local sharpness}\label{r_approximation}
We provide a formal justification for $r_t$ as a principled measure of how local sharpness evolves along the optimization trajectory. 
Following \cite{zhuang2022surrogate}, we define the surrogate gap at a point $w$ as:
 \[
h(w)
=
\max_{\|\delta\| \le \rho}
f(w+\delta)-f(w)
\approx
f^{SAM}(w)-f(w).
\]

\noindent The surrogate gap measures the difference between the maximum loss within the neighborhood and the loss at the center point. 

\begin{theorem}
Let $w^*$ be a local minimum of $f$, and let $\sigma^{(t)}_{\max}\ $ denote the dominant eigenvalue of the Hessian as a measure of sharpness. Assuming the iterates $w_{t-1}$ and $w_t$ lie near $w^\ast$, the ratio $r_t$ defined in Eq.\eqref{rho}  approximates the 
change in sharpness between consecutive steps: $\sigma^{(t)}_{\max} ~/~ \sigma^{(t-1)}_{\max}\approx r_t ~/~ \alpha^2$ where $\alpha =\rho_t ~/~ \rho_{t-1} $ is the ratio of perturbation radii between iterations.
\end{theorem}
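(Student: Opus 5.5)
The plan is to pass through the surrogate gap $h(w)$: expand it to second order about $w^\ast$, and then compare the two expressions for the same quantity at consecutive iterates. The numerator and denominator of $r_t$ in Eq.~\eqref{ratio} are exactly $h(w_t)$ and $h(w_{t-1})$, taken with radii $\rho_t$ and $\rho_{t-1}$. The only subtlety is that each gap is evaluated at its own radius.

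\textbf{Step 1: local expansion.} Near $w^\ast$ the gradient is negligible, since $\nabla f(w^\ast)=0$ and the iterates are assumed close to $w^\ast$. Write $H_t=\nabla^2 f(w_t)$. Taylor expansion gives
\[
h(w_t)\approx\max_{\|\delta\|\le\rho_t}\Bigl(\langle\nabla f(w_t),\delta\rangle+\tfrac12\delta^\top H_t\delta\Bigr).
\]
Dropping the first-order term leaves a maximum of a quadratic form over the ball of radius $\rho_t$. Because $w^\ast$ is a local minimum, $H_t$ is (approximately) positive semidefinite. The maximiser is therefore $\rho_t$ times the top eigenvector, and
\[
h(w_t)\approx\tfrac12\rho_t^2\,\sigma^{(t)}_{\max},
\]
with the analogous formula at $t-1$ using radius $\rho_{t-1}$.

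\textbf{Step 2: form the ratio.} Substituting into the definition of $r_t$ (the absolute values are harmless, since both gaps are nonnegative at a minimum) gives
\[
r_t\approx\frac{\rho_t^2\sigma^{(t)}_{\max}}{\rho_{t-1}^2\sigma^{(t-1)}_{\max}}=\alpha^2\,\frac{\sigma^{(t)}_{\max}}{\sigma^{(t-1)}_{\max}},
\]
which rearranges to the claim $\sigma^{(t)}_{\max}/\sigma^{(t-1)}_{\max}\approx r_t/\alpha^2$. I would also remark that under the update rule \eqref{rho}, $\alpha\in\{\gamma,1/\gamma,1\}$ up to clipping. Since $\gamma$ is close to $1$ in practice, $r_t$ itself tracks the sharpness ratio.

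\textbf{Main obstacle.} The delicate step is justifying that the first-order term is dominated by the second-order one. This requires $\|\nabla f(w_t)\|\ll\rho_t\,\sigma^{(t)}_{\max}$. Near $w^\ast$ we have $\|\nabla f(w_t)\|\lesssim\sigma_{\max}\|w_t-w^\ast\|$, so the condition holds when the distance to the minimum is small relative to $\rho_t$. I would state this as the precise meaning of ``near $w^\ast$'' and bound the error terms by $O(\|\nabla f\|\rho_t+\rho_t^3)$ using third-order smoothness. This mildly conflicts with the last approximation in Eq.~\eqref{ratio}, which keeps the gradient term. I would note that the theorem uses the exact gap form of $r_t$ rather than its gradient-norm surrogate.
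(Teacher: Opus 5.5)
Your proposal follows the paper's proof essentially step for step: like the paper, you identify $r_t$ with the surrogate-gap ratio $h(w_t)/h(w_{t-1})$, expand to second order at $w_t$ with the gradient term dropped, use $\max_{\|\delta\|\le\rho_t}\tfrac12\delta^\top H_t\delta=\tfrac12\sigma^{(t)}_{\max}\rho_t^2$ up to $\mathcal{O}(\rho_t^3)$, and rearrange with $\alpha=\rho_t/\rho_{t-1}$. Your explicit condition $\|\nabla f(w_t)\|\ll\rho_t\sigma^{(t)}_{\max}$ for discarding the first-order term states precisely what the paper simply assumes, and your closing remark that $\alpha\approx 1$ under \eqref{rho} mirrors the paper's follow-up remark.
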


\begin{proof}
We follow the argument of \cite{zhuang2022surrogate}, adapting it to the ratio of
surrogate gaps at consecutive iterates.
Since $\rho_t$ is small, consider the second-order Taylor expansion of $f$ around 
$w_t$: 
\[
f^{SAM}(w_t)=f(w_t+\delta)=f(w_t)+
\nabla f(w_t)^\top \delta+\frac{1}{2}\delta^\top H_t \delta+\mathcal{O}(\|\rho\|^3),
\] 
where $ H_t = \nabla^2 f(w_t)$ is the Hessian at $w_t$, assumed to be positive semidefinite near local minima. Since $w_t$ lies near the local minimum, the gradient can be assumed to be negligible, $\nabla f(w_t)\approx 0$, so the first-order term vanishes.
Normalizing $\delta$ such that $\|\delta\|=\rho_t$, we obtain  
\[
f^{SAM}(w_t)-f(w_t)
=
\max_{\|\delta\| \le \rho_t}
f(w_t+\delta)-f(w_t)
=
\frac{1}{2}\sigma_{\max}^{(t)}\rho_t^2
+
\mathcal{O}(\rho_t^3).
\]

\noindent  where $\lambda_{\max}^{(t)}$ is the largest eigenvalue of the hessian $H_t$. 
Applying the same expansion at step $t-1$ and substituting into $r_t= h(w_t)/h(w_{t-1})$: 
\[
r_t
=\frac{f^{SAM}(w_t)-f(w_t)}{f^{SAM}(w_{t-1})-f(w_{t-1})}=
\frac{
\frac{1}{2}\lambda_{\max}^{(t)}\rho_t^2
+
\mathcal{O}(\rho_t^3)
}{
\frac{1}{2}\lambda_{\max}^{(t-1)}\rho_{t-1}^2
+
\mathcal{O}(\rho_{t-1}^3)
}
\]
Defining $\alpha=
\frac{\rho_t}{\rho_{t-1}}$ and rearranging gives
$
\frac{\lambda_{\max}^{(t)}}
     {\lambda_{\max}^{(t-1)}}
\approx
r_t ~/~\alpha^2
$
\end{proof}

\begin{remark}[First-order interpretation and computational cost] 
The Theorem above establishes $r_t$ as a sharpness measure via a second-order argument. Under the adaptive update rule, however, $r_t$ also admits a simpler interpretation that makes it cheap to compute in practice. Using a first-order Taylor expansion
\[
|f(w_t + \epsilon_t) - f(w_t)|
\approx
|\langle g(w_t), \epsilon_t \rangle|
=
\rho_t \|g(w_t)\|,
\]
where $g(w_t) = \nabla f(w_t)$ and the SAM perturbation is 
$\epsilon_t = \rho_t g(w_t)/\|g(w_t)\|$. Therefore:
\[
r_t
\approx
\frac{\rho_t \|g(w_t)\|}{\rho_{t-1}\|g(w_{t-1})\|}
=
\frac{\rho_t}{\rho_{t-1}}
\cdot
\frac{\|g(w_t)\|}{\|g(w_{t-1})\|}.
\]
Since the adaptive update rule adjusts $\rho$ by a factor of 
$\gamma = 1.001$, consecutive radii satisfy 
$\rho_t/\rho_{t-1} \in \{\gamma, \gamma^{-1}\} = 1 + \mathcal{O}(10^{-3})$, 
and thus:
\[
r_t \approx \frac{\|g(w_t)\|}{\|g(w_{t-1})\|}.
\]
Hence $r_t$ is well approximated by the ratio of consecutive gradient norms \cite{ratio}, a quantity already available during training at no additional computational cost.
\end{remark}

\subsection{Assumptions}
\label{app:assumptions}
Our analysis relies on the following assumptions, which are standard in
nonconvex stochastic optimization.

\begin{assumption}[Bounded Gradient]
$\exists~G \geq 0$ such that $\|\nabla f(w)\| \leq G$.
\label{ass:bounded_gradient}
\end{assumption}

\begin{assumption}[Bounded Variance]
$\exists ~\sigma \geq 0$ such that
$\mathbb{E}\left[\|g(w) - \nabla f(w)\|^2\right] \leq \sigma^2$.
\label{ass:bounded_variance}
\end{assumption}

\begin{assumption}[L-smoothness]
$\exists~L > 0$ such that
$\|\nabla f(w) - \nabla f(v)\| \leq L\|w - v\|,~\forall w, v \in \mathbb{R}^d$.
\label{ass:smoothness}
\end{assumption}

\paragraph{Decaying-radius assumption.}
\label{decaying_radius}
Following existing convergence analyses of SAM and its variants \cite{zhuang2022surrogate,ssam}, we assume that the perturbation radius vanishes asymptotically, i.e., $\rho_t=\rho_{\max}/\sqrt{t}$. As in prior work, the assumption is made for analytical tractability and does not reflect how the method is used in practice. In our experiments (Section~\ref{experiments}), $\rho_t$ follows Eq.~\eqref{rho} within the fixed interval $[\rho_{\min}, \rho_{\max}]$ and is not forced to decay.

\subsection{Proof of Theorem \ref{thm:convergence}}
\label{theorem}
We adapt the standard SAM convergence proof \cite{sam} to the  CAP-SAM formulation. 
We can write the algorithm \ref{alg:algorithm} :
\begin{equation}
    \left\{
    \begin{aligned}
        w_{t+\frac{1}{2}} &= w_t + \epsilon_t \\[4pt]
        w_{t+1}           &= w_t - \eta \cdot g\!\left(w_{t+\frac{1}{2}}\right)
    \end{aligned}
    \right.
    \label{eq:sam_iteration}
\end{equation}
where for SAM $\epsilon_t =\rho \cdot \frac{g(w_t)}{\|g(w_t)\|} $   and for CAP-SAM $\epsilon_t =\rho_t \cdot \frac{g(w_t)}{\|g(w_t)\|} $  with $ \rho_t$ as defined in Eq. \eqref{rho}
              
We first present useful lemmas that will be used to support 
the proof of the main result.
\begin{lemma}
\label{Lemma1}
For any $\rho > 0$, $L > 0$, and differentiable function $f$, we have:
\begin{equation}
    \left\langle
        \nabla f(w_t),\,
        \nabla f\!\left(w_t + \rho\,
            \frac{\nabla f(w_t)}{\|\nabla f(w_t)\|}
        \right)
    \right\rangle
    \;\geq\;
    \|\nabla f(w_t)\|^2 - \rho L G.
\end{equation}
\end{lemma}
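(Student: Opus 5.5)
We write $u = \nabla f(w_t)/\|\nabla f(w_t)\|$, assuming $\nabla f(w_t)\neq 0$; when $\nabla f(w_t)=0$ the inner product is zero and the bound reads $0 \ge -\rho L G$, which holds trivially. The plan is to split the perturbed gradient into the unperturbed gradient plus a correction:
\[
\nabla f(w_t + \rho u) = \nabla f(w_t) + \bigl(\nabla f(w_t+\rho u) - \nabla f(w_t)\bigr).
\]
Taking the inner product with $\nabla f(w_t)$ then gives
\[
\left\langle \nabla f(w_t), \nabla f(w_t+\rho u)\right\rangle
= \|\nabla f(w_t)\|^2 + \left\langle \nabla f(w_t), \nabla f(w_t+\rho u) - \nabla f(w_t)\right\rangle .
\]
So it suffices to bound the second term below by $-\rho L G$.

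For this we use Cauchy--Schwarz, then Assumption~\ref{ass:smoothness} and Assumption~\ref{ass:bounded_gradient}:
\[
\left\langle \nabla f(w_t), \nabla f(w_t+\rho u) - \nabla f(w_t)\right\rangle
\ge -\|\nabla f(w_t)\|\,\|\nabla f(w_t+\rho u) - \nabla f(w_t)\|.
\]
$L$-smoothness bounds the difference of gradients by $L\|\rho u\| = \rho L$, since $\|u\|=1$. The bounded-gradient assumption gives $\|\nabla f(w_t)\|\le G$. The product of these is $\rho L G$, which is the claimed bound.

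There is no real obstacle here. The only point to flag is that the statement says ``differentiable $f$'', but the proof needs Assumptions~\ref{ass:bounded_gradient} and \ref{ass:smoothness} as standing hypotheses; this is how the constants $L$ and $G$ enter.

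An alternative route is to expand $\langle \nabla f(w_t), \nabla f(w_t+\rho u)-\nabla f(w_t)\rangle$ along the segment $w_t+s u$, $s\in[0,\rho]$. This could sharpen the bound to something like $\rho L\|\nabla f(w_t)\|$, but that is not needed. The lemma is later applied with $\rho$ replaced by $\rho_t$. Because the argument is pointwise in $t$, the same proof holds verbatim for the time-varying radius.
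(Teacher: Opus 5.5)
Your proof is correct and follows essentially the same route as the paper: the split $\|\nabla f(w_t)\|^2 + \langle \nabla f(w_t), \nabla f(w_t+\rho u)-\nabla f(w_t)\rangle$ is identical, and the paper's step of factoring out $\|\nabla f(w_t)\|/\rho$ and bounding $\langle \rho u, \nabla f(w_t+\rho u)-\nabla f(w_t)\rangle \ge -L\rho^2$ is just your Cauchy--Schwarz-plus-Lipschitz step in another form, followed by the same use of $\|\nabla f(w_t)\|\le G$. Your flag that the lemma implicitly needs the bounded-gradient and $L$-smoothness assumptions is right; note also that your main argument already gives the sharper intermediate bound $-\rho L\|\nabla f(w_t)\|$, so the segment expansion you mention is not needed to obtain it.
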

\begin{proof}
Adding and subtracting $\|\nabla f(w_t)\|$ inside the inner product 
and factoring out $\|\nabla f(w_t)\|/\rho$ :
\begin{align}
\mathrm{LHS}
    &= \frac{\|\nabla f(w_t)\|}{\rho}
       \left\langle
       \frac{\rho\,\nabla f(w_t)}{\|\nabla f(w_t)\|},\;
       \nabla f\!\left(w_t 
       + \rho\frac{\nabla f(w_t)}{\|\nabla f(w_t)\|}\right)
       - \nabla f(w_t)
       \right\rangle
       + \|\nabla f(w_t)\|^2.
\end{align}
\begin{align}
\mathrm{LHS}
    &\geq
    - L\,\frac{\|\nabla f(w_t)\|}{\rho}
      \left\|\frac{\rho}{\|\nabla f(w_t)\|}
      \nabla f(w_t)\right\|^2
    + \|\nabla f(w_t)\|^2.
    \notag\\[4pt]
    &\geq
    - L\,\frac{\|\nabla f(w_t)\|}{\rho} \cdot \rho^2
    + \|\nabla f(w_t)\|^2
    \notag\\[4pt]
    &= - \|\nabla f(w_t)\|\,\rho L + \|\nabla f(w_t)\|^2.
    \notag
\end{align}
Applying the bounded gradient Assumption~\ref{ass:bounded_gradient}
\begin{align}
    \|\nabla f(w_t)\|^2 - \rho L\,\|\nabla f(w_t)\|
    \;\geq\;
    \|\nabla f(w_t)\|^2 - \rho L G,
\end{align}

\end{proof}

\begin{lemma}
\label{Lemma2}
For $\rho > 0$, $L > 0$, the iteration~\ref{eq:sam_iteration} satisfies the following 
inequality:
\[
    \mathbb{E}\left\langle \nabla f(w_t),\, 
    g\!\left(w_{t+\frac{1}{2}}\right) 
    \right\rangle
    \;\geq\;
    \frac{1}{2}\|\nabla f(w_t)\|^2 - L^2\rho^2 - L\rho G.
\]
\end{lemma}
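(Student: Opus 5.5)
We prove Lemma~\ref{Lemma2} by splitting the stochastic gradient at the perturbed point into a deterministic ``full-gradient-perturbed'' part and an error term, in the style of the original SAM analysis. Write $\hat w_{t+\frac12} = w_t + \rho\,\nabla f(w_t)/\|\nabla f(w_t)\|$ for the perturbation built from the true gradient, and decompose
\[
\mathbb{E}\left\langle \nabla f(w_t),\, g(w_{t+\frac12})\right\rangle
= \mathbb{E}\left\langle \nabla f(w_t),\, g(w_{t+\frac12}) - g(\hat w_{t+\frac12})\right\rangle
+ \mathbb{E}\left\langle \nabla f(w_t),\, g(\hat w_{t+\frac12})\right\rangle .
\]
Since $\hat w_{t+\frac12}$ does not depend on the mini-batch drawn at step $t$, unbiasedness gives $\mathbb{E}\, g(\hat w_{t+\frac12}) = \nabla f(\hat w_{t+\frac12})$. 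Lemma~\ref{Lemma1} then applies directly to the second term, which is therefore at least $\|\nabla f(w_t)\|^2 - \rho L G$.

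For the first term, apply Young's inequality $\langle a,b\rangle \ge -\tfrac12\|a\|^2 - \tfrac12\|b\|^2$ with $a=\nabla f(w_t)$. This yields a loss of $\tfrac12\|\nabla f(w_t)\|^2$, which accounts for the coefficient $\tfrac12$ in the statement, plus $\tfrac12\mathbb{E}\|g(w_{t+\frac12}) - g(\hat w_{t+\frac12})\|^2$. Next, use $L$-smoothness, applied to the stochastic gradient $g$ on the same batch; this per-sample smoothness is implicit in Assumption~\ref{ass:smoothness} as the paper uses it. It bounds the difference by $L^2\|w_{t+\frac12}-\hat w_{t+\frac12}\|^2$. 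Both points lie on the sphere of radius $\rho$ around $w_t$, so
\[
\|w_{t+\frac12}-\hat w_{t+\frac12}\| = \rho\left\|\frac{g(w_t)}{\|g(w_t)\|}-\frac{\nabla f(w_t)}{\|\nabla f(w_t)\|}\right\|\le 2\rho .
\]
The first term is therefore at least $-\tfrac12\|\nabla f(w_t)\|^2 - 2L^2\rho^2$. Adding the two bounds gives
\[
\tfrac12\|\nabla f(w_t)\|^2 - 2L^2\rho^2 - L\rho G .
\]

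\textbf{Main obstacle.} The constant in front of $L^2\rho^2$ is the delicate step. The crude bound of $2\rho$ on the distance between the two unit directions gives $2L^2\rho^2$, not the stated $L^2\rho^2$. To recover the stated constant, I would first bound the squared difference of unit vectors by $2-2\cos\angle$, which is at most $2$, rather than by $4$. Alternatively, I would use the weighted Young's inequality $\langle a,b\rangle\ge -\tfrac14\|a\|^2-\|b\|^2$ and rebalance the constants; this trades the $\tfrac12$ coefficient against the $\rho^2$ term. If neither argument closes the gap, the constant in the statement likely needs to be $2L^2\rho^2$. That change is harmless for Theorem~\ref{thm:convergence}, since it only alters $C_2$ by a constant factor.

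A second subtlety is the conditioning. If the same mini-batch is used for the perturbation and for the gradient $g(w_{t+\frac12})$, as in Algorithm~\ref{alg:algorithm}, then $g(w_{t+\frac12})$ is not an unbiased estimate of $\nabla f(w_{t+\frac12})$. This is exactly why the comparison is made with $g(\hat w_{t+\frac12})$ on the same batch, whose expectation is clean, and not with $\nabla f(w_{t+\frac12})$.

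Finally, for CAP-SAM the same argument holds verbatim with $\rho$ replaced by $\rho_t$, because $\rho_t$ is determined before the step-$t$ gradient evaluation at the perturbed point. This is immediate under the decaying-radius assumption, where $\rho_t$ is deterministic. It yields the per-step bound used in the proof of Theorem~\ref{thm:convergence}.
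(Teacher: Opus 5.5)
Your rigorous chain only gives $\tfrac12\|\nabla f(w_t)\|^2-2L^2\rho^2-L\rho G$, which is weaker than the statement, and neither of your repairs closes the gap.

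\textbf{The first repair is false.} For the unit vectors $u=g(w_t)/\|g(w_t)\|$ and $v=\nabla f(w_t)/\|\nabla f(w_t)\|$ you have $\|u-v\|^2=2-2\langle u,v\rangle\in[0,4]$. This is at most $2$ only when $\langle u,v\rangle\ge 0$. Unbiasedness does not rescue it even in expectation. In one dimension, take $\nabla f=1$ and let $g=10$ with probability $0.2$ and $g=-1.25$ with probability $0.8$. Then $\mathbb{E}\,\mathrm{sign}(g)=-0.6$ and $\mathbb{E}\|u-v\|^2=3.2$.

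\textbf{The second repair cannot work either.} With $\|g(w_{t+\frac12})-g(\hat w_{t+\frac12})\|\le 2L\rho$, any weighted Young bound on the first term has the form $-\tfrac c2\|\nabla f(w_t)\|^2-\tfrac{2L^2\rho^2}{c}$. Getting the coefficient $\tfrac12$ requires $c\le 1$, while getting $L^2\rho^2$ requires $c\ge 2$.

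The paper's proof uses exactly your decomposition. It passes from $\tfrac{L^2\rho^2}{2}\|u-v\|^2$ to $L^2\rho^2$ without justification, so it silently makes your first repair. Your suspicion about the constant is therefore a correct criticism of that route. Your suggested conclusion, that the lemma must be weakened to $2L^2\rho^2$, is not correct, because the statement holds as written.

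\textbf{The missing idea} is to compare $g(w_{t+\frac12})$ with $g(w_t)$ on the same batch, rather than with $g(\hat w_{t+\frac12})$. Per-batch $L$-smoothness gives $\|g(w_{t+\frac12})-g(w_t)\|\le L\rho$ pointwise. By Cauchy--Schwarz, $\langle\nabla f(w_t),g(w_{t+\frac12})\rangle\ge\langle\nabla f(w_t),g(w_t)\rangle-L\rho\|\nabla f(w_t)\|$. Taking the conditional expectation, and using unbiasedness at the deterministic point $w_t$, yields $\mathbb{E}\langle\nabla f(w_t),g(w_{t+\frac12})\rangle\ge\|\nabla f(w_t)\|^2-L\rho G$. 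This dominates the stated right-hand side.

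This argument needs neither Lemma~\ref{Lemma1} nor Young's inequality. Because the key estimate is pointwise, it also tolerates a batch-dependent radius. That matters for your CAP-SAM aside. Under Algorithm~\ref{alg:algorithm}, $\rho_t$ is computed from the current batch's $g(w_t)$. A point $\hat w_{t+\frac12}$ built with $\rho_t$ is then not batch-independent, so your step $\mathbb{E}\,g(\hat w_{t+\frac12})=\nabla f(\hat w_{t+\frac12})$ is valid only in the deterministic decaying-radius setting.
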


\begin{proof}
We denote the deterministic value of $w_{t+\frac{1}{2}}$ as defined in ~\ref{eq:sam_iteration}. Adding and subtracting 
$g\!\left(\hat{w}_{t+\frac{1}{2}}\right)$  decomposes the inner product into two terms: 
\begin{align}
    \left\langle \nabla f(w_t),\, 
    g\!\left(w_{t+\frac{1}{2}}\right) \right\rangle
    &= \underbrace{\left\langle \nabla f(w_t),\;
       g\!\left(w_t + \rho\frac{g(w_t)}{\|g(w_t)\|}\right)
       - g\!\left(\hat{w}_{t+\frac{1}{2}}\right)
       \right\rangle}_{\text{Term I: perturbation direction error}}
       \notag\\[4pt]
    &\quad + \underbrace{\left\langle 
       g\!\left(\hat{w}_{t+\frac{1}{2}}\right),\,
       \nabla f(w_t) \right\rangle}_{\text{Term II: 
       true gradient alignment}}.
       \notag
\end{align}
Bounding Term~I from below by applying Young's inequality, followed by $L$-smoothness: 
\begin{align}
    -\left\langle \nabla f(w_t),\;
      g\!\left(w_t + \rho\frac{g(w_t)}{\|g(w_t)\|}\right)
      - g\!\left(\hat{w}_{t+\frac{1}{2}}\right)
      \right\rangle
    &\leq \frac{1}{2}
      \left\|g\!\left(w_t + \rho\frac{g(w_t)}{\|g(w_t)\|}\right)
      - g\!\left(\hat{w}_{t+\frac{1}{2}}\right)\right\|^2
      + \frac{1}{2}\|\nabla f(w_t)\|^2.
      \notag
\end{align}
\begin{align}
    &\leq \frac{L^2}{2}
      \left\|w_t + \rho\frac{g(w_t)}{\|g(w_t)\|}
      - \hat{w}_{t+\frac{1}{2}}\right\|^2
      + \frac{1}{2}\|\nabla f(w_t)\|^2.
      \notag
\end{align}
\begin{align}
    &= \frac{L^2}{2}
      \left\|w_t + \rho\frac{g(w_t)}{\|g(w_t)\|}
      - \left(w_t + \rho\frac{\nabla f(w_t)}{\|\nabla f(w_t)\|}\right)
      \right\|^2
      + \frac{1}{2}\|\nabla f(w_t)\|^2 
      \notag \\[6pt]
    &= \frac{L^2\rho^2}{2}
      \left\|\frac{g(w_t)}{\|g(w_t)\|}
      - \frac{\nabla f(w_t)}{\|\nabla f(w_t)\|}\right\|^2
      + \frac{1}{2}\|\nabla f(w_t)\|^2.
      \notag
\end{align}
\begin{align}
    \;\geq\; 
    -L^2\rho^2 - \frac{1}{2}\|\nabla f(w_t)\|^2.
    \notag
\end{align}

For the second term applying Lemma~\ref{Lemma1} directly, we have 
\begin{align}
    \text{Term II} \;\geq\; 
    \|\nabla f(w_t)\|^2 - \rho L G.
    \notag
\end{align} 

Combining the two terms: 
\begin{align}
    \mathbb{E}\left\langle \nabla f(w_t),\, 
    g\!\left(w_{t+\frac{1}{2}}\right) \right\rangle
    &= \text{Term I} + \text{Term II}
    \notag\\[4pt]
    &\geq \left(-L^2\rho^2 
      - \frac{1}{2}\|\nabla f(w_t)\|^2\right)
      + \left(\|\nabla f(w_t)\|^2 - \rho L G\right)
    \notag\\[4pt]
    &= \frac{1}{2}\|\nabla f(w_t)\|^2 
      - L^2\rho^2 - L\rho G,
    \notag
\end{align}
\end{proof}

\begin{lemma}
\label{Lemma3}
For $\eta \leq \frac{1}{L}$, the iteration~\ref{eq:sam_iteration} satisfies for all $t > 0$:
\[
    \mathbb{E}f(w_{t+1}) \;\leq\;
    \mathbb{E}f(w_t)
    - \frac{\eta}{2}\mathbb{E}\|\nabla f(w_t)\|^2
    + L\eta^2\sigma^2
    + \eta L^2\rho^2
    + (1 - L\eta)\eta LG\rho.
\]
\end{lemma}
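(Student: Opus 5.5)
The argument is the standard descent-lemma approach for $L$-smooth functions, applied to the update $w_{t+1} = w_t - \eta\, g(w_{t+\frac12})$ of iteration~\eqref{eq:sam_iteration}, with Lemma~\ref{Lemma2} controlling the cross term. By Assumption~\ref{ass:smoothness},
\[
f(w_{t+1}) \le f(w_t) - \eta \left\langle \nabla f(w_t),\, g(w_{t+\frac12})\right\rangle + \frac{L\eta^2}{2}\left\| g(w_{t+\frac12})\right\|^2 .
\]
We then take expectations conditional on $w_t$ and bound the two stochastic terms separately.

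\textbf{Quadratic term.} Write $g(w_{t+\frac12}) = \nabla f(w_{t+\frac12}) + \big(g(w_{t+\frac12}) - \nabla f(w_{t+\frac12})\big)$. Expanding the square and applying Assumption~\ref{ass:bounded_variance} should give
\[
\mathbb{E}\|g(w_{t+\frac12})\|^2 \le 2\,\mathbb{E}\|\nabla f(w_{t+\frac12})\|^2 + 2\sigma^2 .
\]
This produces the $L\eta^2\sigma^2$ term. Next we split $\nabla f(w_{t+\frac12}) = \nabla f(w_t) + \big(\nabla f(w_{t+\frac12}) - \nabla f(w_t)\big)$. Smoothness gives $\|\nabla f(w_{t+\frac12}) - \nabla f(w_t)\| \le L\rho$, so the leftover gradient-squared term is $L\eta^2\,\mathbb{E}\|\nabla f(w_{t+\frac12})\|^2$.

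\textbf{Inner-product term.} Lemma~\ref{Lemma2} gives
\[
-\eta\,\mathbb{E}\langle \nabla f(w_t), g(w_{t+\frac12})\rangle \le -\frac{\eta}{2}\|\nabla f(w_t)\|^2 + \eta L^2\rho^2 + \eta L\rho G .
\]

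\textbf{Recombination.} Rather than bounding the inner product and the quadratic term separately, it is cleaner to use the identity
\[
-\eta\langle a,b\rangle + \frac{L\eta^2}{2}\|b\|^2 \le -\eta(1-L\eta)\langle a,b\rangle + \frac{L\eta^2}{2}\|b-a\|^2 + \ldots
\]
with $a = \nabla f(w_t)$ and $b = \nabla f(w_{t+\frac12})$. This leaves a factor $(1-L\eta)$ multiplying the Lemma~\ref{Lemma2} lower bound, which is where the $(1-L\eta)\eta LG\rho$ term comes from. The condition $\eta\le 1/L$ guarantees $1-L\eta\ge 0$, so the inequality from Lemma~\ref{Lemma2} may be multiplied through without flipping.

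The leftover pieces are $L\eta^2\|b-a\|^2 \le L^3\eta^2\rho^2$ and the $-\tfrac{\eta}{2}(1-L\eta)\|\nabla f(w_t)\|^2$ part of the Lemma~\ref{Lemma2} contribution. Using $L\eta\le 1$, the first is absorbed into $\eta L^2\rho^2$. The coefficient on $\|\nabla f(w_t)\|^2$ is adjusted back to $-\eta/2$ by combining it with the $-\tfrac{L\eta^2}{2}\|a\|^2$-type term from expanding $\|b\|^2$. For CAP-SAM, $\rho$ is replaced by $\rho_t$ throughout; this is legitimate because $\rho_t$ depends only on gradients up to time $t$ and is therefore fixed under the conditional expectation. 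Taking total expectation then yields the stated inequality.

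\textbf{Main obstacle.} The hardest step is the coefficient bookkeeping: getting exactly $-\tfrac{\eta}{2}$ on $\mathbb{E}\|\nabla f(w_t)\|^2$ together with $(1-L\eta)$ on the $G\rho$ term. This requires choosing the order of decomposition carefully, namely expanding $\|b\|^2 = \|a\|^2 + 2\langle a,b-a\rangle + \|b-a\|^2$ before invoking Lemma~\ref{Lemma2}. If the constants do not line up exactly, a looser but still sufficient bound, for instance replacing $(1-L\eta)$ by $1$, would still support Theorem~\ref{thm:convergence} after telescoping.
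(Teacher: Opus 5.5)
Your skeleton matches the paper's: the descent inequality, the identity $\|b\|^2=\|b-a\|^2-\|a\|^2+2\langle a,b\rangle$, and Lemma~\ref{Lemma2} multiplied by $(1-L\eta)\eta\ge 0$. However, the recombination as you set it up does not close, and the problem is the choice $b=\nabla f(w_{t+\frac12})$. The descent inequality and Lemma~\ref{Lemma2} both involve the stochastic gradient $g(w_{t+\frac12})$. You cannot replace $\mathbb{E}\langle \nabla f(w_t), g(w_{t+\frac12})\rangle$ by $\mathbb{E}\langle \nabla f(w_t), \nabla f(w_{t+\frac12})\rangle$, because $w_{t+\frac12}=w_t+\rho\, g(w_t)/\|g(w_t)\|$ is built from the same minibatch as $g(w_{t+\frac12})$. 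That correlation is exactly what Term~I of Lemma~\ref{Lemma2} handles. So the inner product that ends up carrying the factor $(1-L\eta)$ is not the one Lemma~\ref{Lemma2} bounds.

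Separately, your \emph{Quadratic term} step turns $\tfrac{L\eta^2}{2}\|g(w_{t+\frac12})\|^2$ into $L\eta^2\|b\|^2+L\eta^2\sigma^2$. Even granting the identification of the inner products, expanding $L\eta^2\|b\|^2$ with the identity yields $-\eta(1-2L\eta)\langle a,b\rangle$, not $-\eta(1-L\eta)\langle a,b\rangle$, and the sign argument would then need $\eta\le 1/(2L)$. Your \emph{Quadratic term} and \emph{Recombination} steps are two incompatible routes, which is why the coefficients would not line up.

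The fix, which is what the paper does, is to apply the identity pointwise with $b=g(w_{t+\frac12})$, before any noise decomposition. With $a=\nabla f(w_t)$ this gives
\[
f(w_{t+1})\le f(w_t)-\tfrac{L\eta^2}{2}\|a\|^2+\tfrac{L\eta^2}{2}\|a-g(w_{t+\frac12})\|^2-(1-L\eta)\eta\,\langle a,g(w_{t+\frac12})\rangle .
\]
Only the middle term is then split: $\|a-g(w_{t+\frac12})\|^2\le 2\|a-g(w_t)\|^2+2\|g(w_t)-g(w_{t+\frac12})\|^2$. In expectation this contributes $L\eta^2\sigma^2+L^3\eta^2\rho^2$. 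The paper implicitly treats the stochastic gradient as $L$-Lipschitz here. Your split through $\nabla f(w_{t+\frac12})$ also works at this point, at the cost of invoking Assumption~\ref{ass:bounded_variance} at a batch-dependent point.

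Lemma~\ref{Lemma2} times $(1-L\eta)\eta$ then gives exactly $-\tfrac{L\eta^2}{2}-\tfrac{(1-L\eta)\eta}{2}=-\tfrac{\eta}{2}$ and $L^3\eta^2\rho^2+(1-L\eta)\eta L^2\rho^2=\eta L^2\rho^2$. The latter is an exact sum, not an absorption via $L\eta\le 1$: bounding $L^3\eta^2\rho^2\le \eta L^2\rho^2$ on its own would leave $(2-L\eta)\eta L^2\rho^2$.

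Two further points. Your fallback of dropping $(1-L\eta)$ proves a weaker lemma. The factor is genuinely used in Proposition~\ref{Prop1}, where it cancels the $\eta_tL^2\rho_t^2$ term under $\rho_{\max}\le G\eta_0$. Also, your justification for substituting $\rho_t$ is wrong for the adaptive rule. $\rho_t$ depends on $r_t=\|g(w_t)\|/\|g(w_{t-1})\|$, i.e.\ on the current minibatch, so it is not fixed under conditioning on $w_t$. The substitution is harmless only because the theorem uses the deterministic schedule $\rho_t=\rho_{\max}/\sqrt{t}$.
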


\begin{proof}
By the $L$-smoothness of $f$ and the update rule \ref{eq:sam_iteration}:
\begin{align}
f(w_{t+1})
    &\leq f(w_t)
    - \eta\left\langle \nabla f(w_t),\,
      g\!\left(w_{t+\frac{1}{2}}\right)\right\rangle
    + \frac{L\eta^2}{2}
      \left\|g\!\left(w_{t+\frac{1}{2}}\right)\right\|^2.
      \label{eq:smooth_descent}
\end{align}
Substituting the identity
$\left\|g\!\left(w_{t+\frac{1}{2}}\right)\right\|^2
= \left\|\nabla f(w_t) 
  - g\!\left(w_{t+\frac{1}{2}}\right)\right\|^2
- \|\nabla f(w_t)\|^2
+ 2\left\langle \nabla f(w_t),\,
  g\!\left(w_{t+\frac{1}{2}}\right)\right\rangle$
into~\ref{eq:smooth_descent} and rearranging:
\begin{align}
f(w_{t+1})
    &\leq f(w_t)
    - \frac{L\eta^2}{2}\|\nabla f(w_t)\|^2
    + \frac{L\eta^2}{2}
      \left\|\nabla f(w_t) 
      - g\!\left(w_{t+\frac{1}{2}}\right)\right\|^2
    - (1 - L\eta)\eta
      \left\langle \nabla f(w_t),\,
      g\!\left(w_{t+\frac{1}{2}}\right)\right\rangle
      \notag\\
       &\leq f(w_t)
    - \frac{L\eta^2}{2}\|\nabla f(w_t)\|^2
    + L\eta^2\|\nabla f(w_t) - g(w_t)\|^2
    + L\eta^2
      \left\|g(w_t) 
      - g\!\left(w_{t+\frac{1}{2}}\right)\right\|^2
    \notag\\
    &\quad
    - (1-L\eta)\eta
      \left\langle \nabla f(w_t),\,
      g\!\left(w_{t+\frac{1}{2}}\right)\right\rangle.
       \notag
\end{align}
Taking expectation and applying Lemma \ref{Lemma2}: 
\begin{align}
\mathbb{E}f(w_{t+1})
    &\leq \mathbb{E}f(w_t)
    - \frac{L\eta^2}{2}\mathbb{E}\|\nabla f(w_t)\|^2
    + L\eta^2\sigma^2
    + \eta^2 L^3\rho^2
    - (1-L\eta)\eta\,
      \mathbb{E}\!\left\langle \nabla f(w_t),\,
      g\!\left(w_{t+\frac{1}{2}}\right)\right\rangle.
      \notag \\
    &\leq \mathbb{E}f(w_t)
    - \frac{L\eta^2}{2}\mathbb{E}\|\nabla f(w_t)\|^2
    + L\eta^2\sigma^2
    + \eta^2 L^3\rho^2
    - (1-L\eta)\eta
      \left[\frac{1}{2}\mathbb{E}\|\nabla f(w_t)\|^2
      - L^2\rho^2 - L\rho G\right].
    \notag
\end{align}
Collecting terms in $\mathbb{E}\|\nabla f(w_t)\|^2$ and $\rho^2$ yields:
\begin{align}
\mathbb{E}f(w_{t+1})
    &\leq \mathbb{E}f(w_t)
    - \frac{\eta}{2}\mathbb{E}\|\nabla f(w_t)\|^2
    + L\eta^2\sigma^2
    + \eta L^2\rho^2
    + (1-L\eta)\eta LG\rho.
    \notag
\end{align}
which completes the proof.
\end{proof}

\begin{proposition}
\label{Prop1}
Let $\eta_t = \eta_0/\sqrt{t}$ and perturbation amplitude $\rho$ decay 
with the square root of $t$, e.g., $\rho_t = \rho_{\max}/\sqrt{t}$. For $\rho_{\max} \leq G\eta_0$ and $\eta_0 \leq \frac{1}{L}$, we have:
\[
    \frac{1}{T}\sum_{t=1}^{T}
    \mathbb{E}\|\nabla f(w_t)\|^2
    \;\leq\;
    C_1\,\frac{1}{\sqrt{T}}
    + C_2\,\frac{\log T}{\sqrt{T}},
\]
where
\[
    C_1 \;=\; \frac{2}{\eta_0}
    \Bigl(f(w_0) - \mathbb{E}f(w_T)\Bigr),
    \qquad
    C_2 \;=\; 2\!\left(L\sigma^2\eta_0 
    + LG\rho_{\max}\right).
\]
\end{proposition}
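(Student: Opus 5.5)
The plan is to apply Lemma~\ref{Lemma3} at every iteration with the time-varying step size $\eta_t=\eta_0/\sqrt{t}$ and radius $\rho_t=\rho_{\max}/\sqrt{t}$, telescope over $t=1,\dots,T$, and bound the resulting harmonic sums by $1+\log T$. The proof of Lemma~\ref{Lemma3} is a per-step argument: $L$-smoothness at $w_t$ together with Lemma~\ref{Lemma2} at the current radius. It therefore holds verbatim with $\eta,\rho$ replaced by $\eta_t,\rho_t$, provided $\eta_t\le 1/L$. This holds for all $t\ge1$ because $\eta_0\le 1/L$.

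Rearranging Lemma~\ref{Lemma3} gives, for each $t$,
\[
\frac{\eta_t}{2}\mathbb{E}\|\nabla f(w_t)\|^2 \le \mathbb{E}f(w_t)-\mathbb{E}f(w_{t+1}) + L\eta_t^2\sigma^2 + \eta_t L^2\rho_t^2 + (1-L\eta_t)\eta_t LG\rho_t .
\]
Three simplifications follow. First, I drop the factor $(1-L\eta_t)\le 1$. Second, I use $\rho_{\max}\le G\eta_0$ to absorb the quadratic term: $\eta_tL^2\rho_t^2=\eta_t L^2\rho_t\cdot\rho_{\max}/\sqrt t\le \eta_t L^2\rho_t G\eta_0/\sqrt t$. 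Since $L\eta_0\le1$, this is at most $\eta_t LG\rho_t$, so the two $\rho$-terms combine to at most $2\eta_t LG\rho_t$. (If the constant $C_2$ is to come out exactly, the factor $2$ must be kept consistent here; otherwise a harmless constant change results.) Third, I substitute $\eta_t\rho_t=\eta_0\rho_{\max}/t$ and $\eta_t^2=\eta_0^2/t$, so every error term is a multiple of $1/t$.

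Next I lower-bound the left side using monotonicity $\eta_t\ge\eta_T=\eta_0/\sqrt T$ for $t\le T$, so that $\sum_t \frac{\eta_t}{2}\mathbb{E}\|\nabla f(w_t)\|^2\ge \frac{\eta_0}{2\sqrt T}\sum_t\mathbb{E}\|\nabla f(w_t)\|^2$. The function values telescope to $f(w_1)-\mathbb{E}f(w_{T+1})$, which I identify with $f(w_0)-\mathbb{E}f(w_T)$ up to the paper's indexing convention. The error terms sum to $(L\sigma^2\eta_0^2+cLG\eta_0\rho_{\max})\sum_{t\le T}1/t$, and $\sum_{t\le T}1/t\le 1+\log T$. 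Multiplying through by $2\sqrt T/(\eta_0 T)$ then yields
\[
\frac1T\sum_{t}\mathbb{E}\|\nabla f(w_t)\|^2\le \frac{2(f(w_0)-\mathbb{E}f(w_T))}{\eta_0\sqrt T}+\frac{2(L\sigma^2\eta_0+cLG\rho_{\max})(1+\log T)}{\sqrt T}.
\]
The constant part $(1+\log T)-\log T=1$ of the harmonic bound is absorbed into $C_1$, or simply bounded for $T\ge3$.

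The main obstacle is the bookkeeping in the error terms rather than any conceptual step. The step that most needs care is the absorption of $\eta_tL^2\rho_t^2$ into the $LG\rho_t$ term via $\rho_{\max}\le G\eta_0$ and $L\eta_0\le1$, so that $C_2$ matches $2L(\sigma^2\eta_0+\rho_{\max}G)$. I would first try the crude bound above, then tighten it by retaining the $(1-L\eta_t)$ factor against the extra term if the constant needs to be exact.
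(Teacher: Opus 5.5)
Your plan follows the paper's proof: Lemma~\ref{Lemma3} per step with $\eta_t,\rho_t$ (valid since $\eta_t\le\eta_0\le 1/L$), then telescoping, $\eta_t\ge\eta_T$, and a harmonic sum. Your main line, however, falls short of the statement in its handling of the $\rho$-terms. Dropping $(1-L\eta_t)$ and absorbing $\eta_tL^2\rho_t^2$ crudely gives $c=2$, i.e.\ $2(L\sigma^2\eta_0+2LG\rho_{\max})$, which is not the stated $C_2$. The fallback you mention is exactly what the paper does, and it is exact:
\[
\eta_tL^2\rho_t^2+(1-L\eta_t)\eta_tLG\rho_t=\eta_tLG\rho_t+L^2\eta_t\rho_t(\rho_t-G\eta_t),\qquad \rho_t-G\eta_t=\frac{\rho_{\max}-G\eta_0}{\sqrt t}\le 0 .
\]
So the $t^{-3/2}$ contribution is nonpositive and can be discarded, leaving $(L\sigma^2\eta_0^2+LG\eta_0\rho_{\max})/t$ per step. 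This is the only place the hypothesis $\rho_{\max}\le G\eta_0$ is needed. Make it the main line rather than a contingency.

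On the harmonic sum you are right and the paper is not. Its proof uses $\sum_{t\le T}1/t\le\log T$, which fails for every $T$, since the sum is at least $\log(T+1)$. Neither of your patches recovers the stated form, though. $C_1$ is pinned down by its definition, so nothing can be absorbed into it. And using $1+\log T\le 2\log T$ for $T\ge 3$ doubles $C_2$. What the argument actually proves is the bound with $\log T$ replaced by $1+\log T$, i.e.\ an extra $C_2/\sqrt T$. The literal statement cannot hold in general: at $T=1$ its right-hand side reduces to $C_1$, which is negative when the run starts at a strict minimiser and the gradient noise is nondegenerate. State the corrected bound explicitly rather than claiming the exact constants.
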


\begin{proof}
Under the decaying-radius assumption \ref{decaying_radius}, $\rho_t = \rho_{\max}/\sqrt{t}$. Since all three lemmas hold for any $\rho > 0$, they apply with $\rho$ replaced by $\rho_t$ at each step. As $\rho_t \leq \rho_{\max}/\sqrt{t}$ for all $t$, we set $\rho_0 = \rho_{\max}$, corresponding to the worst case in which $\rho_t$ always attains its upper bound. Replacing $\eta_t = \eta_0/\sqrt{t}$ and $\rho_t = \rho_{\max}/\sqrt{t}$ into Lemma~\ref{Lemma3}:
\begin{align}
    \mathbb{E}f(w_{t+1})
    &\leq \mathbb{E}f(w_t)
    - \frac{\eta_t}{2}\mathbb{E}\|\nabla f(w_t)\|^2
    + L\eta_t^2\sigma^2
    + \eta_t L^2\rho_t^2
    + (1 - L\eta_t)\eta_t LG\rho_t.
    \notag
\end{align}
Summing the per-step inequality from $t=1$ to $T$ 
and rearranging, the left-hand side telescopes to 
$\mathbb{E}f(w_T) - f(w_0)$:
\begin{align}
    \mathbb{E}f(w_T) - f(w_0)
    &\leq
    - \sum_{t=1}^{T}\frac{\eta_t}{2}
      \mathbb{E}\|\nabla f(w_t)\|^2
    + \left(L\sigma^2\eta_0^2 + LG\rho_{\max}\eta_0\right)
      \sum_{t=1}^{T}\frac{1}{t}
    \notag\\[4pt]
    &\quad
    + \left(L^2\eta_0\rho_{\max}^2 - L^2G\eta_0^2\rho_{\max}\right)
      \sum_{t=1}^{T}\frac{1}{t^{3/2}}.
    \notag
\end{align}
Drop the last term using $\rho_{\max} \leq G\eta_0$
\begin{align}
    \mathbb{E}f(w_T) - f(w_0)
    &\leq
    - \sum_{t=1}^{T}\frac{\eta_t}{2}
      \mathbb{E}\|\nabla f(w_t)\|^2
    + \left(L\sigma^2\eta_0^2 + LG\rho_{\max}\eta_0\right)
      \sum_{t=1}^{T}\frac{1}{t}.
    \notag
\end{align}
With 
\begin{align}
    \frac{\eta_T}{2}\sum_{t=1}^{T}
    \mathbb{E}\|\nabla f(w_t)\|^2
    &\leq
    \sum_{t=1}^{T}\frac{\eta_t}{2}
    \mathbb{E}\|\nabla f(w_t)\|^2
    \notag
\end{align}
where replacing $\eta_T = \frac{\eta_0}{\sqrt{T}}$ yields: 
\begin{align}
    \frac{\eta_0}{2\sqrt{T}}\sum_{t=1}^{T}
    \mathbb{E}\|\nabla f(w_t)\|^2
    &\leq
    f(w_0) - \mathbb{E}f(w_T)
    + \left(L\sigma^2\eta_0^2 + LG\rho_{\max}\eta_0\right)
      \sum_{t=1}^{T}\frac{1}{t}
      \\
     &\leq
    f(w_0) - \mathbb{E}f(w_T)
    + \left(L\sigma^2\eta_0^2 + LG\rho_{\max}\eta_0\right)\log T.
    \notag
\end{align}
Finally, we archive the result: 
\begin{align}
    \frac{1}{T}\sum_{t=1}^{T}
    \mathbb{E}\|\nabla f(w_t)\|^2
    &\leq
    \frac{2(f(w_0) - \mathbb{E}f(w_T))}{\eta_0}
    \frac{1}{\sqrt{T}}
    + 2\left(L\sigma^2\eta_0 + LG\rho_{\max}\right)
    \frac{\log T}{\sqrt{T}}
    \notag\\[6pt]
    &= C_1\,\frac{1}{\sqrt{T}} 
    + C_2\,\frac{\log T}{\sqrt{T}},
    \notag
\end{align}

This shows that,under the decaying-radius assumption standard in SAM analyses, CAP-SAM  converges at the rate of 
$\mathcal{O}(\log T / \sqrt{T})$.
\end{proof}

\section{Experimental setup}
\label{Exp_setup}
We report the averaged results and standard deviation results over 3 independent runs.

\subsection{CIFAR} 
For CIFAR-10 and CIFAR-100 \cite{krizhevsky2009learning}, we compare test accuracy against SAM and three variants ASAM \cite{kwon2021asam}, Fisher SAM\cite{kim22f}, and VASSO \cite{li2023enhancing} across four neural network architectures VGG-11 \cite{vgg}, ResNet-18 \cite{he2016deep}, WRN-28-10 \cite{DBLP:journals/corr/ZagoruykoK16} and PyramidNet-110 \cite{DBLP:journals/corr/HanKK16}. Standard data augmentation is applied, including random crop, random horizontal flip, normalization, and Cutout \cite{DBLP:journals/corr/abs-1708-04552}. The first three models are trained for 200 epochs with a batchsize of 128, and PyramidNet-110 is trained for 300 epochs. A cosine learning rate schedule is used across all settings, with an initial learning rate of 0.05 and weight decay of 0.001 .  For CIFAR-10, we set $\rho =0.1$ for SAM, Fisher SAM, and VASSO, and $\rho =0.05$ for ASAM, following the original papers. For our adaptive formulation, we use a initial radius $\rho_0 =0.05$ across all architectures, with the exception of ResNet-18 where $\rho_0 =0.1$ , and growth factor $\gamma = 1.001$. 
For CIFAR-100,we set $\rho =0.2$ for SAM, Fisher SAM, and VASSO, and $\rho =1$ for ASAM. For our adaptive formulation, we use a initial radius $\rho_0 =0.2$ across all architectures, and growth factor $\gamma = 1.001$.
\subsection{ImageNet}
For ImageNet-1K , we train ResNet-50 from scratch and evaluate on the ImageNet validation set as well as four additional out-of-distribution test sets: ImageNet-Real \cite{beyer2020we}, ImageNet-V2\cite{recht2019imagenet}, ImageNet-R \cite{hendrycks2021many}, and ImageNet-Sketch \cite{wang2019learning}. Following the setup of GSAM \cite{zhuang2022surrogate} and \cite{chen2021vision}, without strong data augmentation. Models are trained with a global batch size of 4096, a perturbation batch size of $m = 64$. A linear learning rate schedule with warmup is used, with an initial
learning rate of 0.016 and weight decay of 0.001. We evaluate the proposed CAP-SAM formulation applied to SAM, ASAM \cite{kwon2021asam}, and GSAM \cite{zhuang2022surrogate}. Following hyperparameter tuning, we set $\rho =0.04$ for SAM, $\rho =0.05$ for GSAM and $\rho =0.5$ for ASAM. The same values are used as the initial radius for the corresponding adaptive formulations, with growth factor  $\gamma = 1.001$. 

\subsection{Statistical analysis}
Statistical analysis for each test case (CIFAR-10, CIFAR-100 and ImageNet) was performed using an analysis of variance (ANOVA) model, with the perturbation type (constant $\rho$, dynamic $\rho$) as the experimental factor of interest. In the case of CIFAR-10 and CIFAR-100, the Method (SAM, ASAM, FSAM, VaSSO) and the Model (VGG-11-BN, ResNet-18, WRN-28-10, PyramidNet-110) were used as blocking variables, so that their marginal effects on the accuracy are accounted for when estimating the effect of deploying the dynamic $\rho$ perturbation. For ImageNet, two comparisons were performed: the first used the Method (SAM, GSAM and ASAM) and the ImageNet Version (1K, Real, V2, R, Sketch) as blocking variables, to directly estimate the effects of the dynamic $\rho$. The second compared all combinations of method + perturbation strategy directly against the SGD baseline, with Version as a blocking variable.

All statistical analyses were performed in R version 4.6.0. The p-values for the comparisons between constant and dynamic $\rho$ were estimated using function \texttt{aov}. Post-ANOVA tests used Tukey contrasts as implemented in function \texttt{glht} from package \textit{multcomp}. Effect size and confidence interval estimation were obtained applying function \texttt{confint} to the Tukey comparison models. The ANOVA tables for the three comparisons are presented below.

\begin{table} [H]
\centering
\setlength{\tabcolsep}{5pt}
\begin{tabular}{lccccc}
\toprule
\multirow{2}{*}{Factor}  & \multirow{2}{*}{degrees-of-freedom} & \multirow{2}{*}{F-value}  & \multirow{2}{*}{p-value}   & Mean acc. gain ($\%$)& \multirow{2}{*}{$CI_{95}$}\\
&&&& (dynam. - const.)&\\
\midrule
Variant & $1$                  &$19.51$     &$\mathbf{1.83\times10^{-4}}$&0.23&$[0.12,0.33]$\\
Method  & $3$                  &$0.451$ &0.72&---&---\\
Model   & $3$                  &$1107.6$ &$\mathbf{7.42\times10^{-26}}$&---&---\\
Residuals  & $24$&---&---&---&---\\   
\bottomrule
\end{tabular}
\caption{ANOVA table and effect size estimation for the CIFAR-10 test. Statistically significant effects at the $95\%$ confidence level are shown in \textbf{boldface}}
\label{tab:stats_cifar10}
\end{table}

\begin{table} [H]
\centering
\setlength{\tabcolsep}{5pt}
\begin{tabular}{lccccc}
\toprule
\multirow{2}{*}{Factor}  & \multirow{2}{*}{degrees-of-freedom} & \multirow{2}{*}{F-value}  & \multirow{2}{*}{p-value}   & Mean acc. gain ($\%$)& \multirow{2}{*}{$CI_{95}$}\\
&&&& (dynam. - const.)&\\
\midrule
Variant & $1$                  &$13.32$ & $\mathbf{1.27\times10^{-3}}$&1.14&$[0.50,1.78]$\\
Method  & $3$                  &$4.38$ &$\mathbf{0.013}$&---&---\\
Model   & $3$                  &$335.3$ &$\mathbf{1.02\times10^{-19}}$&---&---\\
Residuals  & $24$&---&---&---&---\\   
\bottomrule
\end{tabular}
\caption{ANOVA table and effect size estimation for the CIFAR-100 test. Statistically significant effects at the $95\%$ confidence level are shown in \textbf{boldface}}
\label{tab:stats_cifar100}
\end{table}

\begin{table} [H]
\centering
\setlength{\tabcolsep}{5pt}
\begin{tabular}{lccccc}
\toprule
\multirow{2}{*}{Factor}  & \multirow{2}{*}{degrees-of-freedom} & \multirow{2}{*}{F-value}  & \multirow{2}{*}{p-value}   & Mean acc. gain ($\%$)& \multirow{2}{*}{$CI_{95}$}\\
&&&& (dynam. - const.)&\\
\midrule
Variant & $1$                  &$13.72$ & $\mathbf{1.24\times10^{-3}}$&0.31&$[0.14,0.48]$\\
Method  & $2$                  &$1.41$ &0.27&---&---\\
Version & $4$                  &$1.1\times10^5$ &$\mathbf{7.76\times10^{-47}}$&---&---\\
Residuals  & $24$&---&---&---&---\\   
\bottomrule
\end{tabular}
\caption{ANOVA table and effect size estimation for the first ImageNet test (Variant $\times$ Method $\times$ Version). Statistically significant effects at the $95\%$ confidence level are shown in \textbf{boldface}}
\label{tab:stats_imagenet}
\end{table}

\begin{figure}[H]
    \centering
    \includegraphics[width=\linewidth]{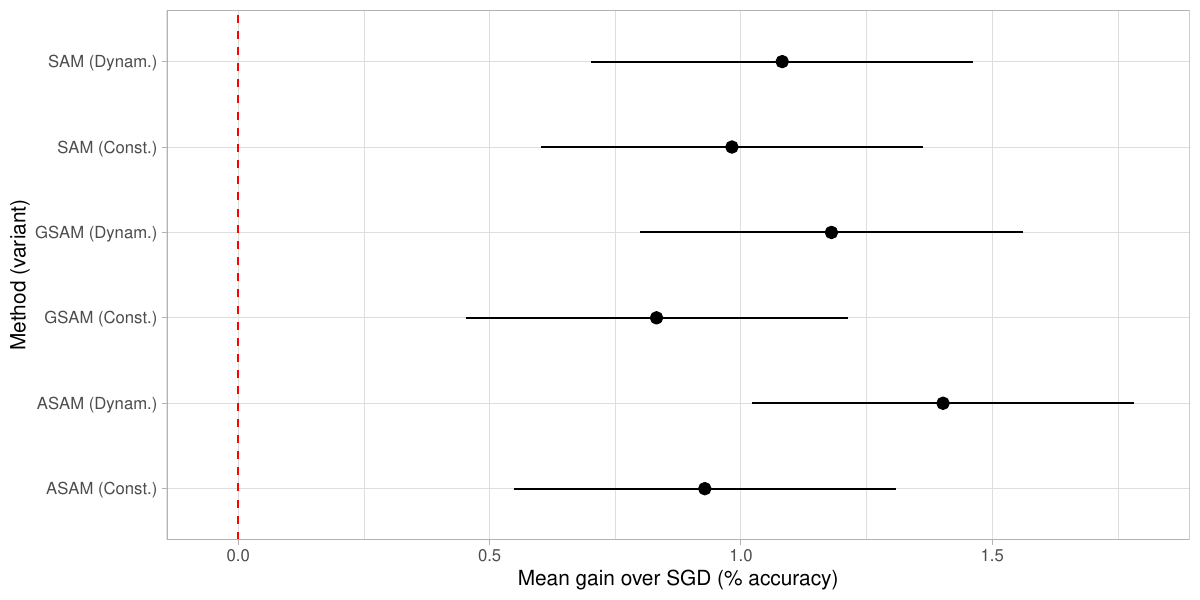}
    \caption{Mean accuracy gains of different Method-variant combinations against the SGD baseline, after removing the dataset Version effects. Using SAM-based approaches result in gains over the SGD baseline in all cases. Moreover, dynamic $\rho$ results in higher gains when compared to constant $\rho$ across all three methods tested.}
    \label{fig:imagenet2_stats}
\end{figure}

\newpage
\section{Ablation Study}
\label{sensitivity}
Figure~\ref{fig:ablation-hyperparamsmaxmin} reports the sensitivity/top-1 accuracy of CAP-SAM to its four hyperparameters, varying one at a time while holding the remaining three at their default values. Panels (a) and (b) report the initial radius $\rho_0$ and the growth factor $\gamma$; panels (c) and (d) report the bounds $\rho_{\min}$ and $\rho_{\max}$. 

CAP-SAM is therefore robust to $\rho_{\min}$ and $\rho_{\max}$, which serve only to clip the updated radius within a safe interval and have no effect unless the clip becomes active. In contrast, $\rho_0$ and $\gamma$ meaningfully influence performance and benefit from light tuning, as they act throughout training and shape how the radius changes. 

\input{ablation}

\newpage
\section{Additional Convergence Results}
Figure \ref{fig:adaptive-cifar10-vs-cifar100} shows convergence of CAP-SAM  training loss on the CIFAR-10 and CIFAR-100 datasets.
Figure \ref{fig:cifar100-train-loss} and \ref{fig:cifar100-test-acc}  shows the training loss and Test top-1 accuracy as a function of the number of epochs for CAP-SAM, SAM, and SGD on CIFAR-100 across all network architectures. CAP-SAM consistently converges faster than both SAM and SGD. 

CAP-SAM attains a consistently lower training loss than SAM while remaining above SGD, and reaches higher test accuracy than both baselines throughout training. This phenomenon clearly shows how SGD descends fastest on the training objective but generalises worst, SAM sacrifices training progress for robustness, and CAP-SAM, by perturbing only as much as the local curvature requires, is able to fit the data quickly while exceeding SAM's generalisation ability.

\input{convergence-appendix}

%% file: vector_misalignment.tex
\begin{figure}[tbp]
\centering

\begin{tikzpicture}[
    scale=1.4,
    >=Stealth,
    every node/.style={font=\small},
    signal/.style={->, line width=2pt, color=teal!70!black},
    noise/.style={->, line width=2pt, color=orange!80!black, dashed},
    sumvec/.style={->, line width=2pt, color=blue!70!black},
    epssam/.style={->, line width=2pt, color=red!90!black},
]
 
\pgfmathsetmacro{\R}{1.5}

\begin{scope}[xshift=0cm]

  \coordinate (O) at (0,0);
  \fill (O) circle (2.5pt);
  \node[below left, font=\footnotesize] at (O) {};

  \coordinate (G) at (3,0);
  \coordinate (X) at (0,3);
  \coordinate (S) at (3,3);

  \draw[signal] (O) -- (G) node[right] {$ \xi$};
  \draw[noise]  (O) -- (X) node[above] {$ \nabla f(w_t) $};
  \draw[sumvec] (O) -- (S) node[above right] {$ g_t(w_t)$};
  
  \draw[gray!35, thin, dashed] (G) -- (S);
  \draw[gray!35, thin, dashed] (X) -- (S);

  \draw[gray!55, thin] (0.22,0) -- (0.22,0.22) -- (0,0.22);

 \coordinate (eSAM) at ($(O)!{\R/veclen(3,3)}!(S)$);
  \draw[epssam] (O) -- (eSAM)
      node[pos=0.99, above,font=\small, xshift=-15pt, yshift=-5pt] {$\varepsilon_{\mathrm{SAM}}$};

 \draw[blue!55, dashed, line width=1pt]
      (\R,0) arc[start angle=0, end angle=90, radius=\R];
  \node[blue!75, font=\small] at (-0.28, \R+0.18) {$\rho$};
 
  \draw[purple!70, line width=1pt]
      ({0.55*\R}, 0)
      arc[start angle=0, end angle=45, radius=0.55*\R];
  \node[purple!80, font=\footnotesize] at
      ({0.72*\R*cos(22)},{0.72*\R*sin(22)}) {$\theta$};
 
  \node[font=\small] at (1.5, -0.65)
      {(a)};

\end{scope}

\begin{scope}[xshift=6cm]

  \coordinate (O) at (0,0);
  \fill (O) circle (2.5pt);
  \node[below left, font=\footnotesize] at (O) {};

  \coordinate (Gb)    at (3.0,  0.0);
  \coordinate (Xbvis) at (0.12, 0.45);
  \coordinate (Sb)    at ($(Gb)+(Xbvis)$);

  \coordinate (Gb) at (3, 0);
  \coordinate (Xb) at (0, 0.6);
  \coordinate (Sb) at ($(Gb)+(Xb)$);   
 
  \draw[signal] (O) -- (Gb) node[right] {$ \xi$};
  \draw[noise]  (O) -- (Xb)
      node[above, font=\footnotesize] {$ \nabla f(w_t) \!\approx\!0$};
  \draw[sumvec] (O) -- (Sb)
      node[above right, font=\footnotesize]
      {$ g_t(w_t)\approx\xi$};

  \draw[gray!35, thin, dashed] (Gb) -- (Sb);
  \draw[gray!35, thin, dashed] (Xb) -- (Sb);

  \draw[gray!55, thin] (0.22,0) -- (0.22,0.22) -- (0,0.22);

  \pgfmathsetmacro{\Sbnorm}{veclen(3,0.6)}
  \coordinate (eSAMb) at ($(O)!{\R/\Sbnorm}!(Sb)$);
  \draw[epssam] (O) -- (eSAMb)
      node[above, xshift=-23pt,yshift=-4pt, font=\footnotesize]
      {$\varepsilon_{\mathrm{SAM}}$};

  \draw[blue!55, dashed, line width=1pt]
      (\R,0) arc[start angle=0, end angle=90, radius=\R];

  \node[blue!75, font=\small] at (-0.28, \R+0.18) {$\rho$};

  \pgfmathsetmacro{\angb}{atan2(0.3,3)}
  \draw[purple!70, line width=1pt]
      (0.55*\R,0) arc[start angle=0, end angle=\angb, radius=\R];
  \node[purple!80, font=\footnotesize] at
      ({2*cos(\angb/2)},{3.2*sin(\angb/2)})
      {$\theta\!\approx\!0$};

  \node[font=\small] at (1.5, -0.65)
      {(b) };

\end{scope}

\end{tikzpicture}
\caption{ Investigation of the SAM perturbation direction near convergence. (a) Away from convergence, $\nabla f(w_t) $ dominates and the perturbation $\varepsilon_{\mathrm{SAM}}$ remains reasonably aligned with the true gradient direction. (b) Near a stationary point, $\nabla f(w_t)  \approx 0$ and $g_t(w_t) \approx \xi$, the perturbation direction becomes increasingly dominated by stochastic gradient noise making the perturbation uninformative about the local loss curvature \cite{li2024friendlysharpnessawareminimization}. }
\label{fig:vectormisalignment}

\end{figure}

%% file: ablation.tex
\pgfplotsset{
  hyperparam/.style={
    axis lines=left,
    tick align=outside,
    tick pos=left,
    grid=none,
    clip=false,
    label style={font=\scriptsize},
    tick label style={font=\scriptsize},
    xlabel style={font=\scriptsize},
    ylabel style={font=\scriptsize},
    ylabel={Top-1 Acc. (\%)},
    ymin=96.0, ymax=97.0,
    ytick={96.0,96.3,96.6,96.9},
  },
  hypline/.style={
    color={rgb,1:red,0.26;green,0.53;blue,0.76},
    line width=1.0pt,
    dash pattern=on 4pt off 2pt on 1pt off 2pt,
    mark=*,
    mark size=3pt,
    mark options={
      fill={rgb,1:red,0.26;green,0.53;blue,0.76},
      draw={rgb,1:red,0.26;green,0.53;blue,0.76},
      solid,
    },
  },
}

\begin{figure}[H]
\centering
\begin{minipage}[t]{0.48\textwidth}
\centering
\begin{tikzpicture}
\begin{axis}[
  hyperparam,
  width=\linewidth, height=5.0cm,
  xlabel={$\rho_0$},
  title={\small (a) Perturbation radius $\rho_0$},
  xmin=0.5, xmax=5.5,
  xtick={1,2,3,4,5},
  xticklabels={0.01,0.02, 0.05, 0.1, 0.2},
]
  \addplot[hypline] coordinates {
    (1, 96.42) (2, 96.45) (3, 96.60) (4, 96.66) (5, 96.29) 
  };
\end{axis}
\end{tikzpicture}
\label{fig:hyper-rho}
\end{minipage}
\hfill
\begin{minipage}[t]{0.48\textwidth}
\centering
\begin{tikzpicture}
\begin{axis}[
  hyperparam,
  title={\small (b) Growth factor $\gamma$},
  width=\linewidth, height=5.0cm,
  xlabel={$\gamma$},
  xmin=0.5, xmax=5.5,
  xtick={1,2,3,4,5},
  xticklabels={1.0001, 1.001, 1.01, 1.05, 1.1},
]
  \addplot[hypline] coordinates {
    (1, 96.84) (2, 96.66) (3, 96.47) (4, 96.58) (5, 96.08)
  };
\end{axis}
\end{tikzpicture}
\label{fig:hyper-gamma}
\end{minipage}

\vspace{1em}

\begin{minipage}[t]{0.48\textwidth}
\centering
\begin{tikzpicture}
\begin{axis}[
  hyperparam,
  title={\small (c) Minimum radius $\rho_{\min}$},
  width=\linewidth, height=5.0cm,
  xlabel={$\rho_{\min}$},
  xmin=0.5, xmax=5.5,
  xtick={1,2,3,4,5},
  xticklabels={$10^{-5}$, $10^{-4}$, $10^{-3}$, $10^{-2}$, $10^{-1}$},
]
  \addplot[hypline] coordinates {
    (1, 96.66) (2, 96.66) (3, 96.66) (4, 96.66) (5, 96.82)
  };
\end{axis}
\end{tikzpicture}
\label{fig:hyper-delta}
\end{minipage}
\hfill     
\begin{minipage}[t]{0.48\textwidth}
\centering
\begin{tikzpicture}
\begin{axis}[
  hyperparam,
  title={\small (d) Maximum radius $\rho_{\max}$},
  width=\linewidth, height=5.0cm,
  xlabel={$\rho_{\max}$},
  xmin=0.5, xmax=5.5,
  xtick={1,2,3,4,5},
  xticklabels={0.1, 0.2, 0.5, 1.0, 2.0},
]
  \addplot[hypline] coordinates {
    (1, 96.75) (2, 96.66) (3, 96.66) (4, 96.66) (5, 96.66)
  };
\end{axis}
\end{tikzpicture}
\label{fig:hyper-rhomax}
\end{minipage}
\caption{%
  Effect of hyperparameters on top-1 accuracy of
CAP-SAM with ResNet-18 on CIFAR-10.
}
\label{fig:ablation-hyperparamsmaxmin}
\end{figure}

%% file: convergence-appendix.tex
\pgfplotsset{compat=1.18}

\pgfplotsset{
  sam plot/.style={
    axis lines=left,
    tick align=outside,
    tick pos=left,
    grid=major,
    grid style={dotted, gray!40},
    label style={font=\small},
    tick label style={font=\small},
    xlabel style={font=\small},
    ylabel style={font=\small},
    legend pos=north east,
    legend style={draw=none, fill=none, font=\tiny},
    clip=false,
  },
  plotadaptive/.style={
    color={rgb,1:red,0.33;green,0.51;blue,0.67},
    line width=1.2pt,
    mark=none,
  },
  plotsam/.style={
    color={rgb,1:red,0.72;green,0.40;blue,0.36},
    line width=1.2pt,
    mark=none,
  },
  sgd/.style={
    color={rgb,1:red,0.42;green,0.57;blue,0.42},
    line width=1.2pt,
    mark=none,
  },
  cifarten/.style={
    color={rgb,1:red,0.76;green,0.56;blue,0.25},
    line width=1.2pt,
    mark=none,
  },
  cifarhundred/.style={
    color={rgb,1:red,0.55;green,0.42;blue,0.60},
    line width=1.2pt,
    mark=none,
  },
}

\pagestyle{empty}

%
\begin{figure}[h]
\centering
\begin{tikzpicture}
\begin{axis}[
  sam plot,
  width=7.2cm, height=5.5cm,
  xlabel={Number of Epochs},
  ylabel={$\mathcal{L}(\mathcal{D};\,w_t)$},
  title={\small (a) ResNet-18},
  xmin=0, xmax=199,
  ymin=0, ymax=4.5,
  ytick={0,1,2,3,4},
  yticklabels={0,1,2,3,4},
]
  \addplot[cifarten] table[x=epoch, y=train_loss, col sep=comma]
    {CIFAR/CIFAR10/CIFAR10_cutout-resnet18-adaptive.csv};
  \addlegendentry{CIFAR-10}
  \addplot[cifarhundred] table[x=epoch, y=train_loss, col sep=comma]
    {CIFAR/CIFAR100/CIFAR100_cutout-resnet18-adaptive.csv};
  \addlegendentry{CIFAR-100}
\end{axis}
\end{tikzpicture}
\hfill
\begin{tikzpicture}
\begin{axis}[
  sam plot,
  width=7.2cm, height=5.5cm,
  xlabel={Number of Epochs},
  ylabel={$\mathcal{L}(\mathcal{D};\,w_t)$},
  title={\small (b) WRN-28-10},
  xmin=0, xmax=199,
  ymin=0, ymax=4.5,
  ytick={0,1,2,3,4},
  yticklabels={0,1,2,3,4},
]
  \addplot[cifarten] table[x=epoch, y=train_loss, col sep=comma]
    {CIFAR/CIFAR10/CIFAR10_cutout-wideresnet28x10-adaptive.csv};
  \addlegendentry{CIFAR-10}
  \addplot[cifarhundred] table[x=epoch, y=train_loss, col sep=comma]
    {CIFAR/CIFAR100/CIFAR100_cutout-wideresnet28x10-adaptive.csv};
  \addlegendentry{CIFAR-100}
\end{axis}
\end{tikzpicture}

\vspace{0.5em}

\begin{tikzpicture}
\begin{axis}[
  sam plot,
  width=7.2cm, height=5.5cm,
  xlabel={Number of Epochs},
  ylabel={$\mathcal{L}(\mathcal{D};\,w_t)$},
  title={\small (c) VGG-11-BN},
  xmin=0, xmax=199,
  ymin=0, ymax=4.5,
    ytick={0,1,2,3,4},
  yticklabels={0,1,2,3,4},
]
  \addplot[cifarten] table[x=epoch, y=train_loss, col sep=comma]
    {CIFAR/CIFAR10/CIFAR10_cutout-vgg11_bn-adaptive.csv};
  \addlegendentry{CIFAR-10}
  \addplot[cifarhundred] table[x=epoch, y=train_loss, col sep=comma]
    {CIFAR/CIFAR100/CIFAR100_cutout-vgg11_bn-adaptive.csv};
  \addlegendentry{CIFAR-100}
\end{axis}
\end{tikzpicture}
\hfill
\begin{tikzpicture}
\begin{axis}[
  sam plot,
  width=7.2cm, height=5.5cm,
  xlabel={Number of Epochs},
  ylabel={$\mathcal{L}(\mathcal{D};\,w_t)$},
  title={\small (d) PyramidNet-110},
  xmin=0, xmax=299,
  ymin=0, ymax=4.5,
  ytick={0,1,2,3,4},
  yticklabels={0,1,2,3,4},
]
  \addplot[cifarten] table[x=epoch, y=train_loss, col sep=comma]
    {CIFAR/CIFAR10/CIFAR10_cutout-pyramidnet110-adaptive.csv};
  \addlegendentry{CIFAR-10}
  \addplot[cifarhundred] table[x=epoch, y=train_loss, col sep=comma]
    {CIFAR/CIFAR100/CIFAR100_cutout-pyramidnet110-adaptive.csv};
  \addlegendentry{CIFAR-100}
\end{axis}
\end{tikzpicture}
\caption{%
  Training loss $\mathcal{L}(\mathcal{D};\,w_t)$ of
  CAP-SAM as a function of the number of epochs,
  comparing CIFAR-10 and CIFAR-100 across four architectures:
  (a) ResNet-18,(b)WRN-28-10,(c) VGG-11-BN and (d) PyramidNet-110.
}
\label{fig:adaptive-cifar10-vs-cifar100}
\end{figure}

\begin{figure}[h]
\centering
\begin{tikzpicture}
\begin{axis}[
  sam plot,
  width=7.2cm, height=5.5cm,
  xlabel={Number of Epochs},
  ylabel={$\mathcal{L}(\mathcal{D};\,w_t)$},
  title={\small (a) ResNet-18},
  xmin=0, xmax=199,
  ymin=0, ymax=4.5,
    ytick={0,1,2,3,4},
  yticklabels={0,1,2,3,4},
]
  \addplot[plotadaptive] table[x=epoch, y=train_loss, col sep=comma]
    {CIFAR/CIFAR100/CIFAR100_cutout-resnet18-adaptive.csv};
  \addlegendentry{CAP-SAM}
  \addplot[plotsam] table[x=epoch, y=train_loss, col sep=comma]
    {CIFAR/CIFAR100/CIFAR100_cutout-resnet18-sam.csv};
  \addlegendentry{SAM}
  \addplot[sgd] table[x=epoch, y=train_loss, col sep=comma]
    {CIFAR/CIFAR100/CIFAR100_cutout-resnet18-sgd.csv};
  \addlegendentry{SGD}
\end{axis}
\end{tikzpicture}
\hfill
\begin{tikzpicture}
\begin{axis}[
  sam plot,
  width=7.2cm, height=5.5cm,
  xlabel={Number of Epochs},
  ylabel={$\mathcal{L}(\mathcal{D};\,w_t)$},
  title={\small (b) WRN-28-10},
  xmin=0, xmax=199,
  ymin=0, ymax=4.5,
    ytick={0,1,2,3,4},
  yticklabels={0,1,2,3,4},
]
  \addplot[plotadaptive] table[x=epoch, y=train_loss, col sep=comma]
    {CIFAR/CIFAR100/CIFAR100_cutout-wideresnet28x10-adaptive.csv};
  \addlegendentry{CAP-SAM}
  \addplot[plotsam] table[x=epoch, y=train_loss, col sep=comma]
    {CIFAR/CIFAR100/CIFAR100_cutout-wideresnet28x10-sam.csv};
  \addlegendentry{SAM}
  \addplot[sgd] table[x=epoch, y=train_loss, col sep=comma]
    {CIFAR/CIFAR100/CIFAR100_cutout-wideresnet28x10-sgd.csv};
  \addlegendentry{SGD}
\end{axis}
\end{tikzpicture}

\vspace{0.5em}

\begin{tikzpicture}
\begin{axis}[
  sam plot,
  width=7.2cm, height=5.5cm,
  xlabel={Number of Epochs},
  ylabel={$\mathcal{L}(\mathcal{D};\,w_t)$},
  title={\small (c) VGG-11-BN},
  xmin=0, xmax=199,
  ymin=0, ymax=4.5,
    ytick={0,1,2,3,4},
  yticklabels={0,1,2,3,4},
]
  \addplot[plotadaptive] table[x=epoch, y=train_loss, col sep=comma]
    {CIFAR/CIFAR100/CIFAR100_cutout-vgg11_bn-adaptive.csv};
  \addlegendentry{CAP-SAM}
  \addplot[plotsam] table[x=epoch, y=train_loss, col sep=comma]
    {CIFAR/CIFAR100/CIFAR100_cutout-vgg11_bn-sam.csv};
  \addlegendentry{SAM}
  \addplot[sgd] table[x=epoch, y=train_loss, col sep=comma]
    {CIFAR/CIFAR100/CIFAR100_cutout-vgg11_bn-sgd.csv};
  \addlegendentry{SGD}
\end{axis}
\end{tikzpicture}
\hfill
\begin{tikzpicture}
\begin{axis}[
  sam plot,
  width=7.2cm, height=5.5cm,
  xlabel={Number of Epochs},
  ylabel={$\mathcal{L}(\mathcal{D};\,w_t)$},
  title={\small (d) PyramidNet-110},
  xmin=0, xmax=299,
  ymin=0, ymax=4.5,
    ytick={0,1,2,3,4},
  yticklabels={0,1,2,3,4},
]
  \addplot[plotadaptive] table[x=epoch, y=train_loss, col sep=comma]
    {CIFAR/CIFAR100/CIFAR100_cutout-pyramidnet110-adaptive.csv};
  \addlegendentry{CAP-SAM}
  \addplot[plotsam] table[x=epoch, y=train_loss, col sep=comma]
    {CIFAR/CIFAR100/CIFAR100_cutout-pyramidnet110-sam.csv};
  \addlegendentry{SAM}
  \addplot[sgd] table[x=epoch, y=train_loss, col sep=comma]
    {CIFAR/CIFAR100/CIFAR100_cutout-pyramidnet110-sgd.csv};
  \addlegendentry{SGD}
\end{axis}
\end{tikzpicture}
\caption{%
  Training loss $\mathcal{L}(\mathcal{D};\,w_t)$ as a function of the number of
  epochs on CIFAR-100, comparing CAP-SAM, SAM, and SGD across four architectures: (a) ResNet-18, (b) WRN-28-10, (c) VGG-11-BN and (d) PyramidNet-110.
} 
\label{fig:cifar100-train-loss}
\end{figure}

\begin{figure}[h]
\centering
\begin{tikzpicture}
\begin{axis}[
  sam plot,
  width=7.2cm, height=5.5cm,
  xlabel={Number of Epochs},
  ylabel={Top-1 accuracy (\%)},
  title={\small (a) ResNet-18},
  xmin=0, xmax=199,
  ymin=0, ymax=99,
  legend style={draw=none, fill=none, font=\tiny, legend pos=south east},
]
  \addplot[plotadaptive] table[x=epoch, y=test_acc1, col sep=comma]
    {CIFAR/CIFAR100/CIFAR100_cutout-resnet18-adaptive.csv};
  \addlegendentry{CAP-SAM}
  \addplot[plotsam] table[x=epoch, y=test_acc1, col sep=comma]
    {CIFAR/CIFAR100/CIFAR100_cutout-resnet18-sam.csv};
  \addlegendentry{SAM}
  \addplot[sgd] table[x=epoch, y=test_acc1, col sep=comma]
    {CIFAR/CIFAR100/CIFAR100_cutout-resnet18-sgd.csv};
  \addlegendentry{SGD}
\end{axis}
\end{tikzpicture}
\hfill
\begin{tikzpicture}
\begin{axis}[
  sam plot,
  width=7.2cm, height=5.5cm,
  xlabel={Number of Epochs},
  ylabel={Top-1 accuracy (\%)},
  title={\small (b) WRN-28-10},
  xmin=0, xmax=199,
  ymin=0, ymax=99,
  legend style={draw=none, fill=none, font=\tiny, legend pos=south east},
]
  \addplot[plotadaptive] table[x=epoch, y=test_acc1, col sep=comma]
    {CIFAR/CIFAR100/CIFAR100_cutout-wideresnet28x10-adaptive.csv};
  \addlegendentry{CAP-SAM}
  \addplot[plotsam] table[x=epoch, y=test_acc1, col sep=comma]
    {CIFAR/CIFAR100/CIFAR100_cutout-wideresnet28x10-sam.csv};
  \addlegendentry{SAM}
  \addplot[sgd] table[x=epoch, y=test_acc1, col sep=comma]
    {CIFAR/CIFAR100/CIFAR100_cutout-wideresnet28x10-sgd.csv};
  \addlegendentry{SGD}
\end{axis}
\end{tikzpicture}

\vspace{0.5em}

\begin{tikzpicture}
\begin{axis}[
  sam plot,
  width=7.2cm, height=5.5cm,
  xlabel={Number of Epochs},
  ylabel={Top-1 accuracy (\%)},
  title={\small (c) VGG-11-BN},
  xmin=0, xmax=199,
  ymin=0, ymax=99,
  legend style={draw=none, fill=none, font=\tiny, legend pos=south east},
]
  \addplot[plotadaptive] table[x=epoch, y=test_acc1, col sep=comma]
    {CIFAR/CIFAR100/CIFAR100_cutout-vgg11_bn-adaptive.csv};
  \addlegendentry{CAP-SAM}
  \addplot[plotsam] table[x=epoch, y=test_acc1, col sep=comma]
    {CIFAR/CIFAR100/CIFAR100_cutout-vgg11_bn-sam.csv};
  \addlegendentry{SAM}
  \addplot[sgd] table[x=epoch, y=test_acc1, col sep=comma]
    {CIFAR/CIFAR100/CIFAR100_cutout-vgg11_bn-sgd.csv};
  \addlegendentry{SGD}
\end{axis}
\end{tikzpicture}
\hfill
\begin{tikzpicture}
\begin{axis}[
  sam plot,
  width=7.2cm, height=5.5cm,
  xlabel={Number of Epochs},
  ylabel={Top-1 accuracy (\%)},
  title={\small (d) PyramidNet-110},
  xmin=0, xmax=299,
  ymin=0, ymax=99,
  legend style={draw=none, fill=none, font=\tiny, legend pos=south east},
]
  \addplot[plotadaptive] table[x=epoch, y=test_acc1, col sep=comma]
    {CIFAR/CIFAR100/CIFAR100_cutout-pyramidnet110-adaptive.csv};
  \addlegendentry{CAP-SAM}
  \addplot[plotsam] table[x=epoch, y=test_acc1, col sep=comma]
    {CIFAR/CIFAR100/CIFAR100_cutout-pyramidnet110-sam.csv};
  \addlegendentry{SAM}
  \addplot[sgd] table[x=epoch, y=test_acc1, col sep=comma]
    {CIFAR/CIFAR100/CIFAR100_cutout-pyramidnet110-sgd.csv};
  \addlegendentry{SGD}
\end{axis}
\end{tikzpicture}
\caption{%
  Test top-1 accuracy as a function of the number of epochs on CIFAR-100,
  comparing CAP-SAM, SAM, and SGD across four architectures: (a) ResNet-18, (b) WRN-28-10, (c) VGG-11-BN and (d) PyramidNet-110.
}
\label{fig:cifar100-test-acc}
\end{figure}